\newcommand{\mset}[1]{\left\{\kern-.5em\left\{ #1 \right\}\kern-.5em\right\}}
\newcommand{\mmset}[1]{\{\kern-.4em\{ #1 \}\kern-.4em\}}
\newcommand{\vol}{V} 
\newcommand{\norm}[1]{\left\Vert#1\right\Vert}
\newcommand{\abs}[1]{\left\vert#1\right\vert}
\newcommand{\set}[1]{\left\{#1\right\}}
\newcommand{\parr}[1]{\left (#1\right )}
\newcommand{\brac}[1]{\left [#1\right ]}
\newcommand{\ip}[1]{\left \langle #1 \right \rangle }
\newcommand{\Real}{\mathbb R}
\newcommand{\eps}{\varepsilon}
\newcommand{\too}{\rightarrow}
\newcommand{\eg}{{e.g.}}
\newcommand{\ie}{{i.e.}}
 \newtheorem{theorem}{Theorem}
 \newtheorem{lemma}{Lemma}
 \newtheorem{corollary}{Corollary}
\def\eqref#1{equation~\ref{#1}}
\def\1{\bm{1}}
\def\eps{{\epsilon}}
\def\ve{{\bm{e}}}
\def\vn{{\bm{n}}}
\def\vu{{\bm{u}}}
\def\vv{{\bm{v}}}
\def\vx{{\bm{x}}}
\def\vy{{\bm{y}}}
\def\vz{{\bm{z}}}
\def\vec1{{\bm{1}}}
\def\mI{{\bm{I}}}
\def\mL{{\bm{L}}}
\def\mM{{\bm{M}}}
\def\mN{{\bm{N}}}
\def\mP{{\bm{P}}}
\DeclareMathAlphabet{\mathsfit}{\encodingdefault}{\sfdefault}{m}{sl}
\SetMathAlphabet{\mathsfit}{bold}{\encodingdefault}{\sfdefault}{bx}{n}
\def\gA{{\mathcal{A}}}
\def\gM{{\mathcal{M}}}
\def\gX{{\mathcal{X}}}
\def\gY{{\mathcal{Y}}}
\newcommand{\E}{\mathbb{E}}
\newcommand{\R}{\mathbb{R}}
\title{Moser Flow: Divergence-based Generative Modeling on Manifolds}
\author{%
  Noam Rozen$^1$ \And 
  Aditya Grover$^{2,3}$ \And
  Maximilian Nickel$^2$ \And
  Yaron Lipman$^{1,2}$ \AND
  \vspace{-25pt} \\
  $^1$Weizmann Institute of Science \qquad
  $^2$Facebook AI Research \quad
  $^3$UCLA
}
\begin{document}
\maketitle

\begin{abstract}
We are interested in learning generative models for complex geometries described via manifolds, such as spheres, tori, and other implicit surfaces. 
Current extensions of existing (Euclidean) generative models are restricted to specific geometries and typically suffer from high computational costs. We introduce \textit{Moser Flow} (MF), a new class of generative models within the family of continuous normalizing flows (CNF). MF also produces a CNF via a solution to the change-of-variable formula, however differently from other CNF methods, its model (learned) density is parameterized as the source (prior) density minus the \textit{divergence} of a neural network (NN). The divergence is a local, linear differential operator, easy to approximate and calculate on manifolds. Therefore, 
unlike other CNFs, MF does not require invoking or backpropagating through an ODE solver during training. Furthermore, representing the model density explicitly as the divergence of a NN rather than as a solution of an ODE facilitates learning high fidelity densities. Theoretically, we prove that MF constitutes a universal density approximator under suitable assumptions. Empirically, we demonstrate for the first time the use of flow models for sampling from general curved surfaces and achieve significant improvements in density estimation, sample quality, and training complexity over existing CNFs on challenging synthetic geometries and real-world benchmarks from the earth and climate sciences. 


\end{abstract}

\section{Introduction}
\label{s:intro}

The major successes of deep generative models in recent years are primarily in domains involving Euclidean data, such as images~\citep{dhariwal2021diffusion}, text~\citep{brown2020language}, and video~\citep{kumar2019videoflow}.
However, many kinds of scientific data in the real world lie in non-Euclidean spaces specified as manifolds. 
Examples include planetary-scale data for earth and climate sciences~\citep{mathieu2020riemannian}, protein interactions and brain imaging data for life sciences~\citep{gerber2010manifold,chen2012triangulated}, as well as 3D shapes in computer graphics~\citep{hoppe1992surface,kazhdan2006poisson}. Existing (Euclidean) generative models cannot be effectively applied in these scenarios as they would tend to assign some probability mass to areas outside the natural geometry of these domains.

An effective way to impose geometric domain constraints for deep generative modeling is to design \textit{normalizing flows} that operate in the desired manifold space.
A normalizing flow maps a prior (source) distribution to a target distribution via the change of variables formula~\citep{rezende2015variational,dinh2016density,papamakarios2019normalizing}.
Early work in this direction proposed invertible architectures for learning probability distributions directly over the specific manifolds defined over spheres and tori~\citep{rezende2020normalizing}.
Recently, ~\cite{mathieu2020riemannian} proposed to extend continuous normalizing flows (CNF)~\citep{chen2018neural} for generative modeling over Riemannian manifolds wherein the flows are defined via vector fields on manifolds and computed as the solution to an associated ordinary differential equation (ODE).
CNFs have the advantage that the neural network architectures parameterizing the flow need not be restricted via invertibility constraints.
However, as we show in our experiments, existing CNFs such as FFJORD~\citep{grathwohl2018ffjord} and Riemannian CNFs~\citep{mathieu2020riemannian} can be slow to converge and the generated samples can be inferior in capturing the details of high fidelity data densities.
Moreover, it is a real challenge to apply Riemannian CNFs to complex geometries such as general curved surfaces.


To address these challenges, we propose Moser Flows (MF), a new class of deep generative models within the CNF family.
An MF models the desired target density as the source density minus the divergence of an (unrestricted) neural network.
The divergence is a local, linear differential operator, easy to approximate and calculate on manifolds.
By drawing on classic results in differential geometry by \citet{moser1965volume} and \citet{dacorogna1990partial}, we can show that this parameterization induces a CNF solution to the change-of-variables formula specified via an ODE.
Since MFs directly parameterize the model density using the divergence, unlike other CNF methods, we do not require to explicitly solve the ODE for maximum likelihood training.
At test-time, we use the ODE solver for generation.
We derive extensions to MFs for Euclidean submanifolds that efficiently parameterize vector fields projected to the desired manifold domain.
Theoretically, we prove that Moser Flows are universal generative models over Euclidean submanifolds. That is, given a Euclidean submanifold $\gM$ and a target continuous positive probability density $\mu$ over $\gM$, MFs can push arbitrary positive source density $\nu$ over $\gM$ to densities $\bar{\mu}$ that are arbitrarily close to $\mu$.

We evaluate Moser Flows on a wide range of challenging real and synthetic problems defined over many different domains.
On synthetic problems, we demonstrate improvements in convergence speed for attaining a desired level of details in generation quality.
We then experiment with two kinds of complex geometries.
First, we show significant improvements of $49\%$ on average over Riemannian CNFs~\citep{mathieu2020riemannian} for density estimation as well as high-fidelity generation on 4 earth and climate science datasets corresponding to global locations of volcano eruptions, earthquakes, floods, and wildfires on spherical geometries.
Next and last, we go beyond spherical geometries to demonstrate for the first time, generative models on general curved surfaces.

\section{Preliminaries}
\label{s:prelims}


%

\textbf{Riemannian manifolds.} We consider an orientable, compact, boundaryless, connected $n$-dimensional \emph{Riemannian manifold} $\gM$ with metric $g$. We denote points in the manifold by $x,y\in\gM$. At every point $x\in\gM$, $T_x\gM$ is an $n$-dimensional tangent plane to $\gM$. The metric $g$ prescribes an inner product on each tangent space; for $v,u\in T_x\gM$, their inner product w.r.t.~$g$ is denoted by $\ip{v,u}_g$. $\mathfrak{X}(\gM)$ is the space of smooth (tangent) vector fields to $\gM$; that is, if $u\in\mathfrak{X}(\gM)$ then $u(x)\in T_x\gM$, for all $x\in\gM$, and if $u$ written in local coordinates it consists of smooth functions. 
We denote by $d\vol$ the \emph{Riemannian volume form}, defined by the metric $g$ over the manifold $\gM$. In particular, $\vol(\gA)=\int_{\gA}d\vol$ is the volume of the set $\gA\subset\gM$.

We consider \emph{probability measures} over $\gM$ that are represented by strictly positive continuous density functions $\mu,\nu:\gM\too\Real_{>0}$, where $\mu$ by convention represents the target (unknown) distribution and $\nu$ represents the source (prior) distribution. $\mu,\nu$ are probability densities in the sense their integral w.r.t.~the Riemannian volume form is one, \ie, $\int_\gM \mu d\vol=1=\int_\gM \nu d\vol$.
It is convenient to consider the volume forms that correspond to the probability measures, namely $\hat{\mu}=\mu d\vol$ and $\hat{\nu} = \nu d\vol$. Volume forms are differential $n$-forms that can be integrated over subdomains of $\gM$, for example, $p_\nu(\gA)=\int_\gA \hat{\nu}$ is the probability of the event $\gA\subset\gM$. 



\textbf{Continuous Normalizing Flows (CNF) on manifolds}~operate by transforming a simple source distribution through a map $\Phi$ into a highly complex and multimodal target distribution. A manifold CNF, $\Phi:\gM\too\gM$, is an orientation preserving diffeomorphism from the manifold to itself \citep{mathieu2020riemannian,lou2020neural,falorsi2020Neural}.
A smooth map $\Phi:\gM\too\gM$ can be used to \emph{pull-back} the target $\hat{\mu}$ according to the formula:
\begin{equation}\label{e:pull_back}
(\Phi^*\hat{\mu})_z (v_1,\ldots,v_n) = \hat{\mu}_{\Phi(z)}(D\Phi_z(v_1),\ldots,D\Phi_z(v_n)),    
\end{equation}
where $v_1,\ldots,v_n\in T_z \gM$ are arbitrary tangent vectors,  $D\Phi_z:T_z \gM \too T_{\Phi(z)} \gM$ is the differential of $\Phi$, namely a linear map between the tangent spaces to $\gM$ at the points $z$ and $\Phi(z)$,  respectively.
By pulling-back $\hat{\mu}$ according to $\Phi$ and asking it to equal to the prior density $\nu$, we get the manifold version of the standard \emph{normalizing equation}:
\begin{equation}\label{e:normalizing}
\hat{\nu} = \Phi^* \hat{\mu}.
\end{equation}
If the normalizing equation holds, then for an event $\gA\subset\gM$ we have that 
$$p_\nu(\gA)=\int_\gA \hat{\nu} = \int_{\gA} \Phi^*\hat{\mu}=\int_{\Phi(\gA)}\hat{\mu}=p_\mu(\Phi(\gA)).$$
Therefore, given a random variable $\vz$ distributed according to $\nu$, then $\vx=\Phi(\vz)$ is distributed according to $\mu$, and $\Phi$ is the \emph{generator}.

One way to construct a CNF $\Phi$ is by solving an ordinary differential equation (ODE) \citep{chen2018neural,mathieu2020riemannian}. Given a time-dependent vector field $v_t \in \mathfrak{X}(\gM)$ with $t\in [0,1]$, a one-parameter family of diffeomorphisms (CNFs) $\Phi_t:[0,1]\times \gM\too\gM$ is defined by 
\begin{equation}\label{e:Phi_ode}
\frac{d}{dt}\Phi_t  = v_t(\Phi_t),
\end{equation}
where this ODE is initialized with the identity transformation, \ie, for all $x\in\gM$ we initialize $\Phi_0(x)=x$. The CNF is then defined by $\Phi(x)=\Phi_1(x)$.

\textbf{Example: Euclidean CNF.} Let us show how the above notions boil down to standard Euclidean CNF for the choice of $\gM=\Real^n$, and the standard Euclidean metric; we denote $\vz=(z^1,\ldots,z^n)\in\Real^n$. The Riemannian volume form in this case is $d\vz=dz^1\wedge dz^2
\wedge \cdots \wedge dz^n$. Furthermore,  $\hat{\mu}(\vz)=\mu(\vz) d\vz$ and $\hat{\nu}(\vz)=\nu(\vz) d\vz$. The pull-back formula (\eqref{e:pull_back}) in coordinates (see \eg, Proposition 14.20 in \cite{lee2013smooth}) is $$\Phi^*\hat{\mu}(\vz) = \mu(\Phi(\vz))\det (D\Phi_\vz) d\vz,$$
where $D\Phi_\vz$ is the matrix of partials of $\Phi$ at point $\vz$, $(D\Phi_\vz)_{ij} = \frac{\partial\Phi^i}{\partial z^j}(\vz)$.
Plugging this in \eqref{e:normalizing} we get the Euclidean normalizing equation:
\begin{equation}
    \nu(\vz) = \mu(\Phi(\vz)) \det(D\Phi_\vz).
\end{equation}


\section{Moser Flow}

\cite{moser1965volume} and \cite{dacorogna1990partial} suggested a method for solving the normalizing equation, that is \eqref{e:normalizing}. Their method explicitly constructs a vector field $v_t$, and the flow it defines via \eqref{e:Phi_ode} is guaranteed to solve \eqref{e:normalizing}. 
We start by introducing the method, adapted to our needs, followed by its application to generative modeling. We will use notations introduced above. 

\subsection{Solving the normalizing equation}
Moser's approach to solving \eqref{e:normalizing} starts by interpolating the source and target distributions. That is, choosing an interpolant $\alpha_t:[0,1]\times\gM\too \Real_{>0}$, such that $\alpha_0=\nu$,  $\alpha_1=\mu$, and $\int_\gM\alpha_t dV=1$ for all $t\in[0,1]$. Then, a time-dependent vector field $v_t\in\mathfrak{X}(\gM)$ is defined so that for each time $t\in[0,1]$, the flow $\Phi_t$ defined by \eqref{e:Phi_ode} satisfies the \emph{continuous normalization equation}:
\begin{equation}\label{e:cont_normalizing}
    \Phi_t^*\hat{\alpha}_t = \hat{\alpha}_0,
\end{equation}
where $\hat{\alpha}_t=\alpha_t d\vol$ is the volume form corresponding to the density $\alpha_t$. Clearly, plugging $t=1$ in the above equation provides a solution to \eqref{e:normalizing} with $\Phi=\Phi_1$.
As it turns out, considering the continuous normalization equation simplifies matters and the sought after vector field $v_t$ is constructed as follows. First, solve the partial differential equation (PDE) over the manifold $\gM$
\begin{equation}\label{e:pde_general}
    \mathrm{div}(u_t) = -\frac{d}{dt}\alpha_t,
\end{equation}
where $u_t\in\mathfrak{X}(\gM)$ is an unknown time-dependent vector field, and $\mathrm{div}$ is the Riemannian generalization to the Euclidean divergence operator, $\mathrm{div}_E = \nabla \cdot$. This manifold divergence operator is defined by replacing the directional derivative of the Euclidean space with its Riemannian version, namely the covariant derivative, 
\begin{equation}\label{def:div}
    \mathrm{div} (u) = \sum_{i=1}^n \ip{\nabla_{e_i}u,e_i}_g,
\end{equation}
where $\set{e_i}_{i=1}^n$ is an orthonormal frame according to the Riemannian metric $g$, and $\nabla_\xi u$ is the Riemannian covariant derivative. 
Note that here we assume that $\gM$ is boundaryless, otherwise we need $u_t$ to be also tangent to the boundary of $\gM$. 
Second, define 
\begin{equation}\label{e:v_t_general}
    v_t=\frac{u_t}{\alpha_t}.
\end{equation}
Theorem 2 in \cite{moser1965volume} implies:
\begin{theorem}[Moser]\label{thm:moser}
The diffeomorphism $\Phi=\Phi_1$, defined by the ODE in \eqref{e:Phi_ode} and vector field $v_t$ in \eqref{e:v_t_general} solves the normalization equation, \ie, \eqref{e:normalizing}.
\end{theorem}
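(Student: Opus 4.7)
The plan is to prove the stronger statement that $\Phi_t^* \hat{\alpha}_t = \hat{\alpha}_0$ for every $t \in [0,1]$, i.e.\ that the continuous normalization equation (\ref{e:cont_normalizing}) holds at all times. Specializing to $t=1$, and using $\hat{\alpha}_0 = \hat{\nu}$ and $\hat{\alpha}_1 = \hat{\mu}$, this yields $\Phi^*\hat{\mu} = \hat{\nu}$, which is precisely (\ref{e:normalizing}). So the whole argument reduces to showing that the time-dependent volume form $t \mapsto \Phi_t^*\hat{\alpha}_t$ is constant, and that $\Phi_t$ is a well-defined diffeomorphism for all $t \in [0,1]$.

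The central computation is to differentiate $\Phi_t^*\hat{\alpha}_t$ in $t$ and verify that the derivative vanishes. Using the standard identity for pullbacks along a flow,
\begin{equation*}
\frac{d}{dt}\Phi_t^*\hat{\alpha}_t \;=\; \Phi_t^*\!\left(\partial_t \hat{\alpha}_t + \mathcal{L}_{v_t}\hat{\alpha}_t\right),
\end{equation*}
I would first record that for any smooth function $f$ and any smooth vector field $w$ on $\gM$, the Lie derivative of the top form $f\,d\vol$ satisfies $\mathcal{L}_{w}(f\,d\vol)=\mathrm{div}(fw)\,d\vol$ (this is the Riemannian analogue of the product rule $\nabla\!\cdot\!(fw) = \langle\nabla f, w\rangle + f\,\mathrm{div}(w)$, combined with $\mathcal{L}_w d\vol = \mathrm{div}(w)\,d\vol$). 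Applying this with $f=\alpha_t$ and $w=v_t$ gives $\mathcal{L}_{v_t}\hat{\alpha}_t = \mathrm{div}(\alpha_t v_t)\,d\vol$. By the definition (\ref{e:v_t_general}) of $v_t$, we have $\alpha_t v_t = u_t$, and the PDE (\ref{e:pde_general}) then gives $\mathrm{div}(\alpha_t v_t) = \mathrm{div}(u_t) = -\tfrac{d}{dt}\alpha_t$. Substituting back,
\begin{equation*}
\partial_t \hat{\alpha}_t + \mathcal{L}_{v_t}\hat{\alpha}_t \;=\; \Bigl(\tfrac{d}{dt}\alpha_t - \tfrac{d}{dt}\alpha_t\Bigr) d\vol \;=\; 0,
\end{equation*}
so $\tfrac{d}{dt}\Phi_t^*\hat{\alpha}_t \equiv 0$. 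Since $\Phi_0 = \mathrm{id}$ gives $\Phi_0^*\hat{\alpha}_0 = \hat{\alpha}_0$, the constant value is $\hat{\alpha}_0$, which is what we wanted.

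The remaining obligation is to justify that $\Phi_t$ exists as an orientation-preserving diffeomorphism of $\gM$ for all $t \in [0,1]$, so that the pullback computation above is meaningful. Here I would invoke the standing assumptions: $\gM$ is compact without boundary, and $\alpha_t$ is strictly positive and continuous (smooth) in $(t,x)$, so $v_t = u_t/\alpha_t$ is a well-defined smooth time-dependent vector field on a compact manifold. Standard ODE theory on manifolds then provides a unique global flow $\Phi_t$ that is a diffeomorphism for each $t$; orientation preservation follows because $\Phi_0 = \mathrm{id}$ and $\Phi_t$ depends continuously on $t$.

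The main obstacle I anticipate is not the computation itself but making sure the Lie derivative identity $\mathcal{L}_w(f\,d\vol) = \mathrm{div}(fw)\,d\vol$ is invoked with the same Riemannian divergence used in the PDE (\ref{e:pde_general}) -- that is, the one defined through the covariant derivative in (\ref{def:div}). If one had used the boundaryless assumption loosely, an analogous argument with boundary would require additionally that $u_t$ be tangent to $\partial\gM$; the paper already flags this caveat, so we are safe in the stated setting. Everything else is a direct substitution.
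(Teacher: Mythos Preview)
Your proof is correct and follows essentially the same route as the paper: differentiate $\Phi_t^*\hat{\alpha}_t$, express the result via the Lie derivative of a top form, and use the divergence identity together with the PDE (\ref{e:pde_general}) to see that the derivative vanishes. The paper's write-up reaches the same divergence identity through Cartan's magic formula rather than your direct $\mathcal{L}_w(f\,d\vol)=\mathrm{div}(fw)\,d\vol$, and it additionally proves solvability of (\ref{e:pde_general}) via the Hodge decomposition (which you reasonably treat as a hypothesis of the theorem), but the core argument is identical.
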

\begin{wrapfigure}[20]{r}{0.37\textwidth}
\vspace{-30pt}
  \begin{center}
    \includegraphics[width=0.35\textwidth]{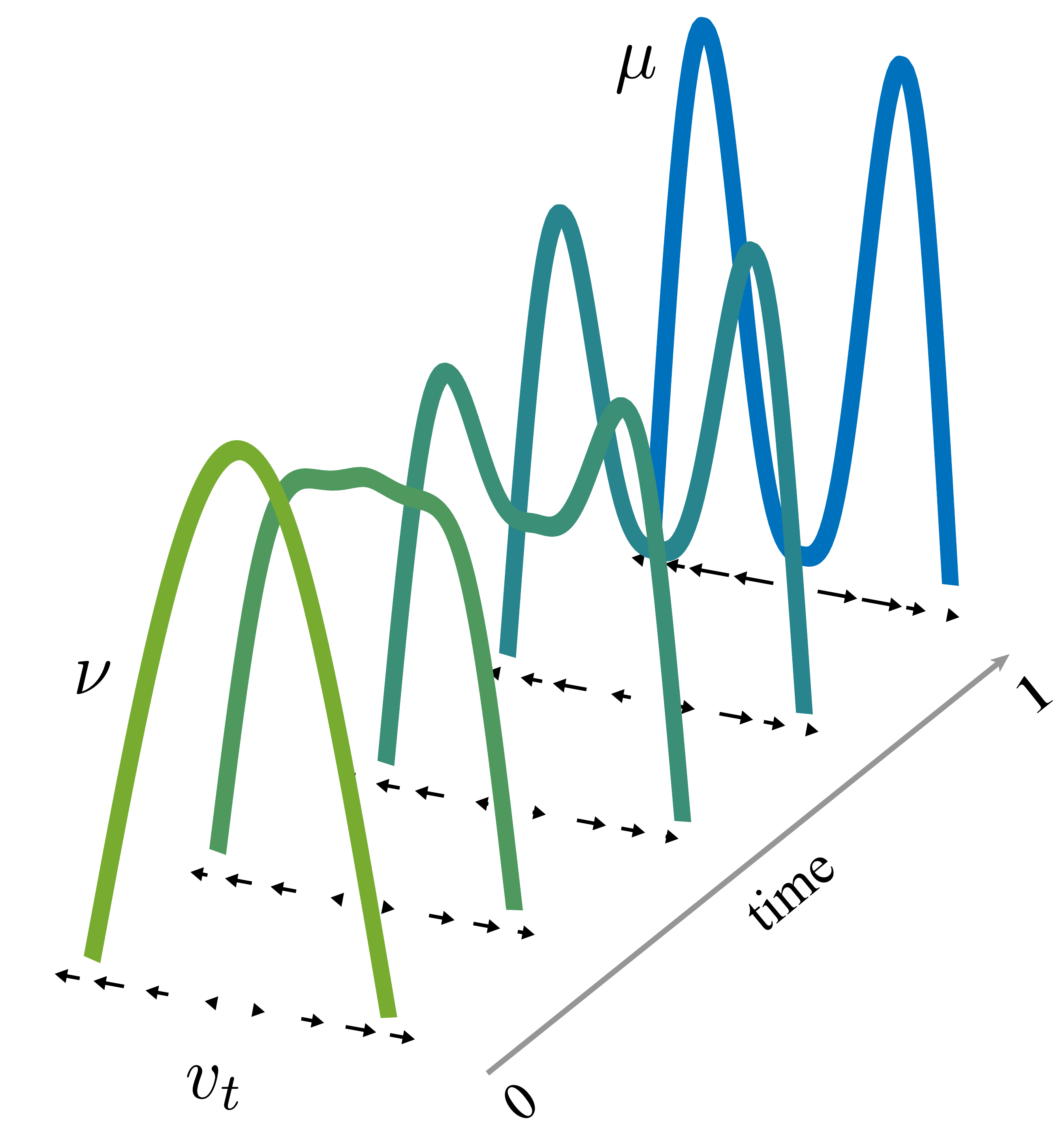}
  \end{center}
  \caption{1D example of Moser Flow: source density $\nu$ in green, target $\mu$ in blue. The vector field $v_t$ (black) is guaranteed to push $\nu$ to interpolated density $\alpha_t$ at time $t$, \ie, $(1-t)\nu+t\mu$.}\label{fig:1d}
\end{wrapfigure}
The proof of this theorem in our case is provided in the supplementary for completeness. A simple choice for the interpolant $\alpha_t$ that we use in this paper was suggested in  \cite{dacorogna1990partial}: 
\begin{equation}\label{e:interpolant}
    \alpha_t=(1-t)\nu + t\mu.
\end{equation}
The time derivative of this interpolant, \ie, $\frac{d}{dt}\alpha_t=\mu-\nu$, does not depend on $t$. Therefore the vector field can be chosen to be constant over time, $u_t=u$, and the PDE in \eqref{e:pde_general} takes the form \begin{equation}\label{e:pde}
    \mathrm{div}(u) = \nu-\mu,
\end{equation}
and consequently $v_t$ takes the form
\begin{equation}\label{e:v_t}
    v_t=\frac{u}{(1-t)\nu+t\mu}.
\end{equation}
Figure \ref{fig:1d} shows a one dimensional illustration of Moser Flow. 

\subsection{Generative model utilizing Moser Flow}
We next utilize MF to define our generative model. 
%
Our model (learned) density $\bar{\mu}$ is motivated from \eqref{e:pde} and is defined by
\begin{equation}\label{e:model}
    \bar{\mu} = \nu - \mathrm{div}(u),
\end{equation}
where $u\in \mathfrak{X}(\gM)$ is the degree of freedom of the model. We model this degree of freedom, $u$, with a deep neural network, more specifically, Multi-Layer Perceptron (MLP). We denote $\theta\in\Real^p$ the learnable parameters of $u$. We start by noting that, by construction, $\bar{\mu}$ has a unit integral over $\gM$:
\begin{lemma}\label{lem:int_mu}
If $\gM$ has no boundary, or $\vu\vert_{\partial \gM}\in \mathfrak{X}(\partial\gM)$, then $\int_\gM \bar{\mu} d\vol = 1$.
\end{lemma}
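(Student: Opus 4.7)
The plan is to split the integral using linearity and dispose of each piece separately:
\begin{equation*}
\int_\gM \bar{\mu} \, d\vol \;=\; \int_\gM \nu \, d\vol \;-\; \int_\gM \mathrm{div}(u) \, d\vol.
\end{equation*}
The first term equals $1$ by the assumption that $\nu$ is a probability density on $\gM$, so the whole statement reduces to proving that the divergence term vanishes.

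For the second term I would invoke the Riemannian divergence theorem, which is the standard corollary of Stokes' theorem obtained by writing $\mathrm{div}(u)\,d\vol$ as the exterior derivative of the contraction $\iota_u\,d\vol$. It gives
\begin{equation*}
\int_\gM \mathrm{div}(u) \, d\vol \;=\; \int_{\partial \gM} \ip{u,\vn}_g \, d\sigma,
\end{equation*}
where $\vn$ is the outward unit normal to $\partial\gM$ and $d\sigma$ is the induced volume form on $\partial\gM$. Under either hypothesis of the lemma, the right-hand side vanishes: if $\partial\gM=\emptyset$, the integral is empty; and if $u|_{\partial\gM} \in \mathfrak{X}(\partial\gM)$, then $u$ is tangent to $\partial\gM$ at every boundary point, so $\ip{u,\vn}_g=0$ pointwise on $\partial\gM$ and the boundary integral is zero.

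Combining the two pieces yields $\int_\gM \bar{\mu}\, d\vol = 1 - 0 = 1$, as claimed. I do not anticipate a real obstacle here: the result is essentially just the divergence theorem plus the probability normalization of $\nu$. The only mildly non-trivial point is recognizing that the second hypothesis (tangency of $u$ to $\partial\gM$) is precisely what is needed to kill the boundary term, which is why the lemma bundles the two cases together.
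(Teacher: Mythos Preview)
Your argument is correct and essentially identical to the paper's: both reduce the claim to showing $\int_\gM \mathrm{div}(u)\,d\vol=0$ via Stokes' theorem, then kill the boundary term using either $\partial\gM=\emptyset$ or tangency of $u$ to $\partial\gM$. The only cosmetic difference is that the paper works directly with the form $i_u\,d\vol$ rather than passing to the normal-component formulation $\ip{u,\vn}_g\,d\sigma$, but these are equivalent.
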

This lemma is proved in the supplementary and a direct consequence of Stokes' Theorem. If $\bar{\mu}>0$ over $\gM$ then, together with the fact that $\int_\gM \bar{\mu} d\vol=1$ (Lemma \ref{lem:int_mu}), it is a probability density over $\gM$. Consequently, Theorem \ref{thm:moser} implies that $\bar{\mu}$ is realized by a CNF defined via $v_t$:
\begin{corollary}\label{thm:CNF}
If $\bar{\mu}>0$ over $\gM$ then $\bar{\mu}$ is a probability distribution over $\gM$, and is generated by the flow $\Phi=\Phi_1$, where $\Phi_t$ is the solution to the ODE in \eqref{e:Phi_ode} with the vector field $v_t\in\mathfrak{X}(\gM)$ defined in \eqref{e:v_t}.
\end{corollary}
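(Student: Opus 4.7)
The plan is to reduce the corollary to a direct application of Theorem \ref{thm:moser} (Moser), with $\bar{\mu}$ playing the role of the target density $\mu$ and the same source $\nu$. First I would establish that $\bar{\mu}$ is a probability density on $\gM$: by hypothesis $\bar{\mu}>0$ pointwise, and by Lemma \ref{lem:int_mu} (an application of Stokes' Theorem, using that $\gM$ is boundaryless or that $u$ is tangent to $\partial\gM$) we already have $\int_\gM \bar{\mu}\, dV = 1$. Together these give that $\bar{\mu}$ is a strictly positive, continuous probability density on $\gM$, which is exactly the class of target distributions for which Moser's construction applies.

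Next I would verify that the hypotheses feeding into Theorem \ref{thm:moser} are met with this choice of target. Define the interpolant $\alpha_t = (1-t)\nu + t\bar{\mu}$ as in \eqref{e:interpolant}. Since both $\nu$ and $\bar{\mu}$ are strictly positive, $\alpha_t>0$ on $[0,1]\times\gM$, so the quotient in \eqref{e:v_t} is well-defined and smooth wherever $u$ is smooth, giving a genuine time-dependent vector field $v_t\in\mathfrak{X}(\gM)$. Moreover, the PDE \eqref{e:pde} that Moser's construction requires, namely $\mathrm{div}(u) = \nu - \bar{\mu}$, holds automatically: it is just a rearrangement of the defining equation \eqref{e:model} of our model $\bar{\mu} = \nu - \mathrm{div}(u)$. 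Hence $u$ is a valid solution of the PDE driving the Moser construction for the pair $(\nu,\bar{\mu})$.

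With these pieces in place, Theorem \ref{thm:moser} applies directly: the flow $\Phi_t$ produced by the ODE \eqref{e:Phi_ode} with the vector field $v_t$ from \eqref{e:v_t} is well-defined on $[0,1]$ (compactness of $\gM$ and smoothness of $v_t$ ensure global existence of integral curves), is a one-parameter family of diffeomorphisms of $\gM$, and satisfies the continuous normalizing equation $\Phi_t^* \hat{\alpha}_t = \hat{\alpha}_0$. Evaluating at $t=1$ gives $\Phi_1^* \hat{\bar{\mu}} = \hat{\nu}$, which is exactly the statement that $\Phi = \Phi_1$ pushes the source $\nu$ onto $\bar{\mu}$, i.e.\ $\bar{\mu}$ is generated by the CNF $\Phi$.

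The only subtle point, and what I view as the main obstacle to streamline, is making sure nothing in Moser's construction requires more than what we have: in particular that $v_t$ is smooth enough for a global flow, and that the boundary condition assumption of Lemma \ref{lem:int_mu} is consistent with the boundary assumption needed in Moser's theorem. Since the corollary is stated in the boundaryless setting of Section \ref{s:prelims}, both issues disappear: $\alpha_t$ is bounded away from zero on the compact manifold $\gM$, so $v_t = u/\alpha_t$ inherits the regularity of $u$ (smooth by the MLP parameterization), and no boundary tangency condition is needed. Everything else is a direct invocation of results already established in the excerpt.
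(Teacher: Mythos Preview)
Your proposal is correct and matches the paper's own reasoning: the paper treats Corollary~\ref{thm:CNF} as an immediate consequence of Lemma~\ref{lem:int_mu} (giving $\int_\gM \bar{\mu}\,d\vol=1$, hence $\bar{\mu}$ is a probability density once $\bar{\mu}>0$) together with Theorem~\ref{thm:moser}, noting that \eqref{e:model} is exactly the PDE \eqref{e:pde} for the pair $(\nu,\bar{\mu})$. Your write-up simply spells out these two steps and the well-definedness of $v_t$ in more detail than the paper does.
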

Since $\bar{\mu}>0$ is an open constraint and is not directly implementable, we replace it with the closed constraint $\bar{\mu}\geq \eps$, where $\eps>0$ is a small hyper-parameter. We define 
\begin{align*}
    \bar{\mu}_+(x) = \max\set{\eps, \bar{\mu}(x)}; \quad \bar{\mu}_-(x) = \eps - \min\set{\eps,\bar{\mu}(x)}.
\end{align*}
As can be readily verified:
\begin{equation}\label{e:bar_mu}
 \bar{\mu}_+,\bar{\mu}_- \geq 0,\ \text{ and  } \bar{\mu} = \bar{\mu}_+ - \bar{\mu}_-.
\end{equation}

We are ready to formulate the loss for training the generative model. Consider an unknown target distribution $\mu$, provided to us as a set of i.i.d.~observations $\gX=\set{x_i}_{i=1}^m \subset \gM$. Our goal is to maximize the likelihood of the data $\gX$ while making sure $\bar{\mu}\geq \epsilon$. We therefore consider the following loss:
\begin{equation}\label{e:loss}
\ell(\theta) = -\E_{\mu} \log \bar{\mu}_+(x) + \lambda \int_\gM  \bar{\mu}_-(x) d\vol
\end{equation}
where $\lambda$ is a hyper-parameter. The first term in the loss is approximated by the empirical mean computed with the observations $\gX$, \ie, $$\E_{x\sim \mu} \log \bar{\mu}_+(x) \approx \frac{1}{m}\sum_{i=1}^m \log \bar{\mu}_+(x_i).$$ This term is merely the negative log likelihood of the observations. 

The second term in the loss penalizes the deviation of $\bar{\mu}$ from satisfying $\bar{\mu}\geq \eps$. According to Theorem \ref{thm:CNF}, this measures the deviation of $\bar{\mu}$ from being a density function and realizing a CNF. One point that needs to be verified is that the combination of these two terms does not push the minimum away from the target density $\mu$. This can be verified with the help of the generalized Kullback–Leibler (KL) divergence providing a distance measure between arbitrary positive functions $f,g:\gM\too\Real_{>0}$:
\begin{equation}\label{e:kl}
    D(f,g) = \int_\gM f \log\parr{\frac{f}{g}}d\vol - \int_\gM fd\vol + \int_\gM g d\vol.
\end{equation}
Using the generalized KL, we can now compute the distance between the positive part of our model density, \ie, $\bar{\mu}_+$, and the target density:
\begin{align*}
D(\mu,\bar{\mu}_+) &= \E_\mu \log\parr{\frac{\mu}{\bar{\mu}_+}} - \int_\gM \mu d\vol + \int_\gM \bar{\mu}_+ d\vol   \\
&= \E_\mu \log \mu - \E_\mu \log \bar{\mu}_+  + \int_\gM \bar{\mu}_- d\vol
\end{align*}
where in the second equality we used Lemma \ref{lem:int_mu}. The term $\E_\mu \log \mu$ is the negative entropy of the unknown target distribution $\mu$. The loss in \eqref{e:loss} equals $D(\mu,\bar{\mu}_+)-\E_\mu \log \mu + (\lambda-1)\int_\gM \bar{\mu}_-d\vol$. 
Therefore, if $\lambda \geq 1$, and $\min_{x\in\gM} \mu(x) > \eps$ (we use the compactness of $\gM$ to infer existence of such a minimal positive value), then the unique minimum of the loss in \eqref{e:loss} is the target density, \ie, $\bar{\mu}=\mu$. Indeed, the minimal value of this loss is $-\E_\mu \log \mu$ and it is achieved by setting $\bar{\mu}=\mu$. Uniqueness follows by considering an arbitrary minimizer $\bar{\mu}$. Since such a minimizer satisfies $D(\mu,\bar{\mu}_+)=0$ and $\int_\gM \bar{\mu}_- d\vol=0$, necessarily $\bar{\mu}=\mu$. We proved:
\begin{theorem}\label{thm:min}
For $\lambda\geq 1$ and sufficiently small $\eps>0$, the unique minimizer of the loss in \eqref{e:loss} is $\bar{\mu}=\mu$.
\end{theorem}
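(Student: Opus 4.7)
The plan is to formalize the derivation already sketched in the paragraphs preceding the theorem, by decomposing the loss into a constant (in $\theta$) plus two manifestly non-negative terms, and then reading off the unique minimizer from the conditions for equality.

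First I would rewrite the data term in \eqref{e:loss} using the generalized KL divergence $D(\mu,\bar{\mu}_+)$. Expanding the definition \eqref{e:kl} and using $\int_\gM \mu\, d\vol=1$ together with Lemma \ref{lem:int_mu} and \eqref{e:bar_mu} (which imply $\int_\gM \bar{\mu}_+\, d\vol = 1 + \int_\gM \bar{\mu}_-\, d\vol$) gives
$$D(\mu,\bar{\mu}_+) = \E_\mu \log\mu - \E_\mu \log \bar{\mu}_+ + \int_\gM \bar{\mu}_-\, d\vol.$$
Substituting into \eqref{e:loss} yields the clean decomposition
$$\ell(\theta) = -\E_\mu \log\mu + D(\mu,\bar{\mu}_+) + (\lambda-1)\int_\gM \bar{\mu}_-\, d\vol.$$
For $\lambda\geq 1$ the last two terms are non-negative, so $\ell(\theta)\geq -\E_\mu \log\mu$, with the first term being a constant independent of the model. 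For attainability, compactness of $\gM$ and continuity and strict positivity of $\mu$ yield $m:=\min_{x\in\gM}\mu(x)>0$; choosing $\eps<m$ and taking $\bar{\mu}=\mu$ (representable in the model class via any solution $u$ of $\mathrm{div}(u)=\nu-\mu$, whose existence is guaranteed by Moser's theorem) gives $\bar{\mu}\geq m>\eps$, so $\bar{\mu}_+=\mu$ and $\bar{\mu}_-\equiv 0$, and $\ell$ attains the lower bound.

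For uniqueness, any minimizer satisfies $D(\mu,\bar{\mu}_+)=0$, and additionally $\int_\gM \bar{\mu}_-\, d\vol=0$ when $\lambda>1$. The pointwise inequality $a\log(a/b)-a+b\geq 0$ for $a,b>0$, with equality iff $a=b$, integrated over $\gM$, gives $D(\mu,\bar{\mu}_+)=0 \iff \bar{\mu}_+=\mu$ on $\gM$. Since $\mu>\eps$ everywhere, the clipping $\bar{\mu}_+=\max\{\eps,\bar{\mu}\}$ is inactive at the optimum, so $\bar{\mu}_+=\mu$ forces $\bar{\mu}=\mu$ and $\bar{\mu}_-\equiv 0$. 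The main subtlety is precisely this last step: the $\eps$-clipping could in principle create spurious minimizers at which $\bar{\mu}\neq \mu$, but choosing $\eps<\min\mu$ rules this out; this also handles the boundary case $\lambda=1$, where the loss alone does not force $\int_\gM \bar{\mu}_-\, d\vol=0$, but the inactive-clipping argument still forces $\bar{\mu}_-\equiv 0$.
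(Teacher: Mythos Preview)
Your proof is correct and follows essentially the same approach as the paper: the same decomposition $\ell(\theta)=-\E_\mu\log\mu+D(\mu,\bar{\mu}_+)+(\lambda-1)\int_\gM\bar{\mu}_-\,d\vol$, the same attainability argument via $\eps<\min_\gM\mu$, and the same use of $D(\mu,\bar{\mu}_+)=0\Rightarrow\bar{\mu}_+=\mu$. The only minor variation is in the final step of uniqueness: you conclude $\bar{\mu}=\mu$ directly from the pointwise observation that $\bar{\mu}_+=\mu>\eps$ makes the clipping inactive, whereas the paper instead combines $\bar{\mu}_+=\mu$ with the integral constraint $\int_\gM\bar{\mu}\,d\vol=1$ (Lemma~\ref{lem:int_mu}) to force $\int_\gM\bar{\mu}_-\,d\vol=0$; both finishes are valid and equally short.
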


\paragraph{Variation of the loss.} Lemma \ref{lem:int_mu} and \eqref{e:bar_mu} imply that $\int_\gM \bar{\mu}_+ d\vol=\int_\gM \bar{\mu}_- d\vol + 1$. Therefore, an equivalent loss to the one presented in \eqref{e:loss} is:
\begin{equation}\label{e:loss_variation}
    \ell(\theta) = -\E_{\mu} \log \bar{\mu}_+(x) + \lambda_- \int_\gM  \bar{\mu}_- d\vol + \lambda_+ \int_\gM  \bar{\mu}_+ d\vol
\end{equation}
with $\lambda_- + \lambda_+ \geq 1$.
Empirically we found this loss favorable in some cases (\ie, with $\lambda_+>0$). 

\paragraph{Integral approximation.}
The integral $\int_\gM \bar{\mu}_-d\vol$ in the losses in \eqref{e:loss_variation} and \ref{e:loss} is approximated by considering a set $\gY=\set{y_j}_{j=1}^l$ of i.i.d.~samples according to some distribution $\eta$ over $\gM$ and taking a Monte Carlo estimate
\begin{equation}\label{e:mc}
    \int_\gM  \bar{\mu}_- d\vol  \approx \frac{1}{l}\sum_{j=1}^l \frac{\bar{\mu}_-(y_j)}{\eta(y_j)}.
\end{equation}
$\int_\gM \bar{\mu}_+ d\vol$ is approximated similarly. 
In this paper we opted for the simple choice of taking $\eta$ to be the uniform distribution over $\gM$.



\section{Generative modeling over Euclidean submanifolds}
In this section, we adapt the Moser Flow (MF) generative model to submanifolds of Euclidean spaces. That is we consider an orientable, compact, boundaryless, connected $n$-dimensional submanifold $\gM\subset \Real^d$, where $n < d$. 
Examples include implicit surfaces and manifolds (\ie, preimage of a regular value of a smooth function), as well as triangulated surfaces and manifold simplicial complexes. 
%
We denote points in $\Real^d$ (and therefore in $\gM$) with $\vx,\vy\in\Real^d$. As the Riemannian metric of $\gM$ we take the induced metric from $\Real^d$; that is given arbitrary tangent vectors $\vv,\vu\in T_\vx\gM$ the metric is defined by $\ip{\vv,\vu}_g=\ip{\vv,\vu}$, where the latter is the Euclidean inner product. We denote by $\pi:\Real^d\too\gM$ the closest point projection on $\gM$, \ie, 
$\pi(\vx) = \min_{\vy\in\gM}\norm{\vx-\vy}$, with $\norm{\vy}^2=\ip{\vy,\vy}$ the Euclidean norm in $\Real^d$. 
Lastly, we denote by $\mP_\vx\in\Real^{d\times d}$ the orthogonal projection matrix on the tangent space $T_\vx\gM$; in practice if we denote by $\mN\in\Real^{d\times k}$ the matrix with orthonormal columns spanning $N_\vx \gM=(T_x \gM)^\perp$ (\ie, the normal space to $\gM$ at $\vx$) then, $\mP_\vx = \mI - \mN\mN^T$.

We parametrize the vector field $\vu$ required for our MF model (in \eqref{e:model}) by defining a vector field $\vu\in\mathfrak{X}(\Real^d)$ such that  $\vu\vert_{\gM}\in \mathfrak{X}(\gM)$. We define  
\begin{equation}\label{e:vu}
    \vu(\vx) = \mP_{\pi(\vx)} \vv_\theta(\pi(\vx)),
\end{equation}
where $\vv_\theta :\Real^d\too\Real^d$ is an MLP with Softplus activation ($\beta=100$) and learnable parameters $\theta\in\Real^p$. By construction, for $\vx\in \gM$, $\vu(\vx)\in T_\vx\gM$. 

To realize the generative model, we need to compute the divergence $\mathrm{div} (\vu(\vx))$ for $\vx\in\gM$ with respect to the Riemannian manifold $\gM$ and metric $g$. The vector field $\vu$ in \eqref{e:vu} is constant along normal directions to the manifold at $\vx$ (since $\pi(\vx)$ is constant in normal directions). If $\vn \in N_\vx\gM$, then in particular
\begin{equation}\label{e:n_constant}
    \frac{d}{dt}\Big\vert_{t=0}\vu(\vx+t\vn) = 0.
\end{equation}
We call vector fields that satisfy \eqref{e:n_constant} infinitesimally constant in the normal direction. As we show next, such vector fields $\vu\in \mathfrak{X}(\gM)$ have the useful property that their divergence along the manifold $\gM$ coincides with their Euclidean divergence in the ambient space $\Real^d$:
\begin{lemma}\label{lem:div}
If $\vu\in\mathfrak{X}(\Real^d)$, $\vu\vert_\gM\in\mathfrak{X}(\gM)$ is infinitesimally constant in normal directions of $\gM$, then for $\vx\in\gM$, $\mathrm{div}(\vu(\vx))=\mathrm{div}_E(\vu(\vx))$, where $\mathrm{div}_E$ denotes the standard Euclidean divergence. 
\end{lemma}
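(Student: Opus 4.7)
The plan is to evaluate both divergences in a common orthonormal basis of $\mathbb{R}^d$ adapted to the splitting $\mathbb{R}^d = T_\vx\gM \oplus N_\vx\gM$, and show they agree term by term.

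Concretely, fix $\vx \in \gM$ and choose an orthonormal frame $\{e_1,\ldots,e_n\}$ of $T_\vx\gM$ together with an orthonormal frame $\{n_1,\ldots,n_k\}$ of $N_\vx\gM$, where $k=d-n$. Together these form an orthonormal basis of $\mathbb{R}^d$. Since the Euclidean divergence is invariant under orthonormal changes of basis,
\[
\mathrm{div}_E(\vu(\vx)) = \sum_{i=1}^n \ip{D\vu|_\vx\, e_i,\, e_i} + \sum_{j=1}^k \ip{D\vu|_\vx\, n_j,\, n_j},
\]
where $D\vu|_\vx$ is the ambient Jacobian. The hypothesis \eqref{e:n_constant} that $\vu$ is infinitesimally constant in normal directions means exactly $D\vu|_\vx\, n_j = 0$ for each $j$, so the second sum vanishes.

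For the first sum, I would use the standard Gauss formula for the Levi-Civita connection of a submanifold with the induced metric: for tangent vector fields $X,Y$ on $\gM$,
\[
\nabla^\gM_X Y \;=\; \mP_\vx\!\left(\nabla^{\mathbb{R}^d}_X Y\right),
\]
where $\nabla^{\mathbb{R}^d}$ is the flat Euclidean connection, i.e.\ simply the directional derivative. Applied to $\vu|_\gM$ along the direction $e_i$, this gives $\nabla_{e_i} \vu = \mP_\vx \bigl(D\vu|_\vx\, e_i\bigr)$. Since $e_i \in T_\vx\gM$ satisfies $\mP_\vx e_i = e_i$ and $\mP_\vx$ is self-adjoint,
\[
\ip{\nabla_{e_i}\vu,\, e_i}_g \;=\; \ip{\mP_\vx(D\vu|_\vx e_i),\, e_i} \;=\; \ip{D\vu|_\vx e_i,\, \mP_\vx e_i} \;=\; \ip{D\vu|_\vx e_i,\, e_i}.
\]
Summing over $i=1,\ldots,n$ and combining with the vanishing of the normal contribution yields $\mathrm{div}(\vu(\vx)) = \mathrm{div}_E(\vu(\vx))$, as required.

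The only subtle point is justifying the Gauss formula in this setup; this is a standard fact (e.g.\ Proposition 8.3 in \cite{lee2013smooth} applied to the induced connection), and it relies on the manifold carrying the induced Riemannian metric, which is precisely our assumption. Verifying that $\vu|_\gM$ is a well-defined tangent vector field (so that $\nabla_{e_i}\vu$ makes intrinsic sense) is immediate from the construction \eqref{e:vu}, since $\mP_{\pi(\vx)}\vv_\theta(\pi(\vx))$ takes values in $T_{\pi(\vx)}\gM$. Beyond that, the argument is a direct bookkeeping of how the ambient Jacobian decomposes relative to the tangent/normal splitting, so I do not anticipate any real obstacle.
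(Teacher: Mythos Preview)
Your proof is correct and follows essentially the same route as the paper's: both pick an adapted orthonormal basis $\{e_i\}\cup\{n_j\}$, invoke the Gauss formula $\nabla_{e_i}\vu = \mP_\vx(D\vu|_\vx e_i)$ for the induced connection, use self-adjointness of $\mP_\vx$ together with $\mP_\vx e_i=e_i$ to identify the tangential terms, and use the normal-constancy hypothesis to annihilate the $n_j$ contributions. The only cosmetic difference is the direction in which the chain of equalities is read and the reference cited for the Gauss formula.
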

This lemma simply means we can compute the Euclidean divergence of $\vu$ in our implementation. 
Given a set of observed data $\gX=\set{\vx_i}_{i=1}^m \subset \gM \subset \Real^d$, and a set of uniform i.i.d.~samples $\gY=\set{\vy_j}_{j=1}^l \subset \gM$ over $\gM$, our loss (\eqref{e:loss}) takes the form
\begin{align*}
    \ell(\theta) &= -\frac{1}{m}\sum_{i=1}^m \log \max\set{\eps,\nu(\vx_i)-\mathrm{div}_E\vu(\vx_i)} + \frac{\lambda'_-}{l}\sum_{j=1}^l \Big( \eps - \min\set{\eps,\nu(\vy_j)-\mathrm{div}_E\vu(\vy_j)} \Big ),
\end{align*}
where $\lambda_-' = \lambda_- \vol(M)$. We note the volume constant can be ignored by considering an un-normalized source density $\vol(M)\nu \equiv 1$, see supplementary for details.
The loss in \eqref{e:loss_variation} is implemented similarly, namely, we add the empirical approximation of $\lambda_+ \int_\gM \bar{\mu}_+ d\vol$.

We conclude this section by stating that the MF generative model over Euclidean submanifolds (defined with equations \ref{e:model} and \ref{e:vu}) is universal. That is, MFs can generate, arbitrarily well, any continuous target density $\mu$ on a submanifold manifold $\gM \subset \Real^d$. 
\begin{theorem}\label{thm:universality}
Given an orientable, compact, boundaryless, connected, differentiable $n$-dimensional submanifold $\gM\subset\Real^d$, $n< d$, and a target continuous probability density $\mu:\gM\too\Real_{>0}$, there exists for each $\eps>0$ an MLP $\vv_\theta$ and a choice of weights $\theta$ so that $\bar{\mu}$ defined by equations \ref{e:model} and \ref{e:vu} satisfies $$\max_{\vx\in\gM} \abs{\mu(\vx) - \bar{\mu}(\vx)} < \eps.$$
\end{theorem}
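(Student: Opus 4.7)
The plan is to establish universality by a three-step reduction: first mollify to obtain a smooth target density, then invoke Moser's PDE solvability to produce an exact smooth ambient field realizing that density, and finally approximate that field by a Softplus MLP in the $C^1$ topology so that the resulting divergence, and hence $\bar{\mu}$, tracks the target uniformly.

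I would begin by smoothing: since $\mu$ is only continuous but the Moser/Dacorogna construction behind Theorem \ref{thm:moser} expects more regularity, I would use a partition of unity together with standard mollification in local charts to produce a smooth positive density $\mu_1:\gM\too\Real_{>0}$ with $\int_\gM\mu_1\,d\vol=1$ and $\norm{\mu-\mu_1}_\infty<\eps/2$. Since $\int_\gM(\nu-\mu_1)d\vol=0$, the existence theory from \cite{dacorogna1990partial} (or, alternatively, Hodge theory on closed Riemannian manifolds) yields a smooth tangent field $u^*\in\mathfrak{X}(\gM)$ with $\mathrm{div}(u^*)=\nu-\mu_1$, so that $\nu-\mathrm{div}(u^*)=\mu_1$ pointwise on $\gM$.

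Next, I would extend $u^*$ to a smooth ambient field $\wt{\vv}:\Real^d\too\Real^d$ with $\wt{\vv}|_\gM=u^*$, using the tubular neighborhood of the compact embedded submanifold $\gM$ together with a smooth cutoff. Setting $\wt{\vu}(\vx)=\mP_{\pi(\vx)}\wt{\vv}(\pi(\vx))$ as in \eqref{e:vu}, and using that $\pi$ is the identity on $\gM$ together with $u^*(\vx)\in T_\vx\gM$, one checks directly that $\wt{\vu}|_\gM=u^*$. By construction $\wt{\vu}$ is infinitesimally constant in the normal direction, so Lemma \ref{lem:div} gives $\mathrm{div}(\wt{\vu})=\mathrm{div}_E(\wt{\vu})$ on $\gM$, and consequently $\nu-\mathrm{div}_E(\wt{\vu})=\mu_1$ exactly on $\gM$.

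Finally, I would invoke the universal approximation theorem for MLPs with smooth non-polynomial activations in the $C^1$ topology (which applies to Softplus) on a compact set $\Omega\supset\gM$ contained in the tubular neighborhood where $\pi$ is smooth. For any $\delta>0$ this produces parameters $\theta$ with $\norm{\vv_\theta-\wt{\vv}}_{C^1(\Omega)}<\delta$. The mapping $\vv\mapsto\mathrm{div}_E(\mP_{\pi(\cdot)}\vv(\pi(\cdot)))$ is bounded and linear from $C^1(\Omega,\Real^d)$ to $C^0(\gM)$, with operator norm depending only on the $C^2$-norms of $\pi$ and $\mP_{\pi(\cdot)}$ on $\Omega$, so the induced $\bar{\mu}=\nu-\mathrm{div}_E(\vu_\theta)$ satisfies $\norm{\bar{\mu}-\mu_1}_\infty\le C\delta$. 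Choosing $\delta<\eps/(2C)$ and combining with the mollification error yields $\norm{\mu-\bar{\mu}}_\infty<\eps$.

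The main obstacle I expect is the last step: passing $C^1$-approximation of the MLP through the composition with $\pi$ and the projection $\mP_{\pi(\cdot)}$ to obtain genuine $C^0$-control of the ambient divergence on $\gM$. This relies on uniform bounds on the second derivatives of $\pi$ on $\Omega$, which is where compactness and embeddedness of $\gM$ enter essentially through the tubular neighborhood theorem. The smoothing step and the solvability of the divergence equation are classical once the integral compatibility condition is in place.
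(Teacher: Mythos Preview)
Your argument is correct and follows essentially the same route as the paper: solve the divergence equation on $\gM$ to obtain an exact tangent field $u^*$, extend it to a tubular neighborhood via the closest-point projection $\pi$, approximate the extended field by an MLP in $C^1$ on a compact $\Omega\supset\gM$, and control the error in $\mathrm{div}_E$ via the product/chain rule using uniform bounds on $\pi$ and $\mP_{\pi(\cdot)}$ and their derivatives. The paper invokes Hornik's $C^1$ universal approximation result for $l$-finite activations and carries out the final divergence bound in the same way.

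The one noteworthy difference is your preliminary mollification step. The paper applies Theorem~\ref{thm:moser} directly to the merely continuous target $\mu$ to obtain a smooth $u^*$, which tacitly assumes more regularity than the statement provides; your $\eps/2$ smoothing of $\mu$ to a $C^\infty$ density $\mu_1$ before invoking the Hodge/Dacorogna solvability is a cleaner way to ensure $u^*\in\mathfrak{X}(\gM)$ is genuinely smooth. Your extension also differs cosmetically: you first pick an arbitrary smooth ambient extension $\wt{\vv}$ and then project, whereas the paper sets $\vu^\star(\vx)=u^*(\pi(\vx))$ directly; both land on the same field on $\gM$ and both yield the infinitesimal normal constancy needed for Lemma~\ref{lem:div}. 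In short, the two proofs are the same in structure, with your version being slightly more careful about regularity.
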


\begin{figure*}
\centering
\begin{tabular}{@{\hskip2.5pt}c@{\hskip2.5pt}c@{\hskip2.5pt}c@{\hskip2.5pt}c@{\hskip2.5pt}c@{\hskip2.5pt}c@{\hskip2.5pt}}
    input data & samples & density & \hspace{0.15cm} input data & samples & density \\
    
    \includegraphics[width=20mm]{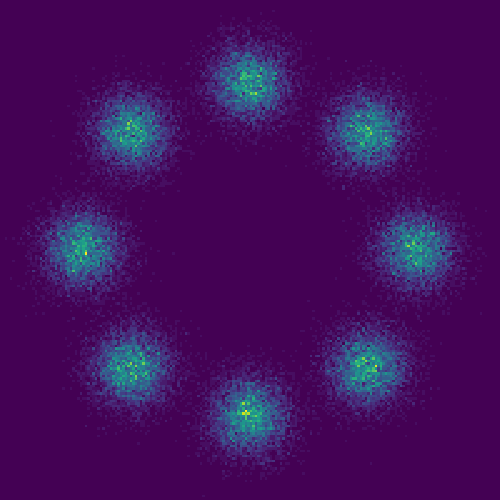}
    &
    \includegraphics[width=20mm]{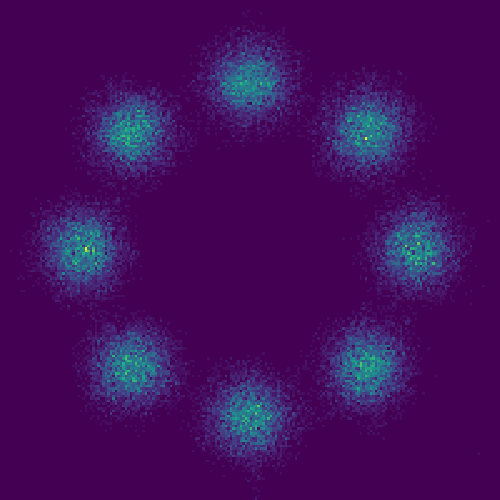}
    &
    \includegraphics[width=20mm]{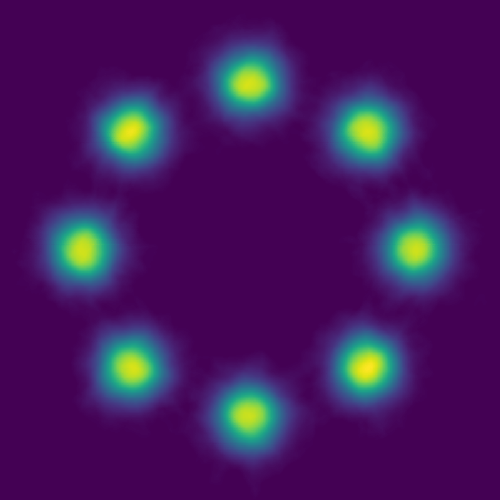}
    & \hspace{0.15cm}
    \includegraphics[width=20mm]{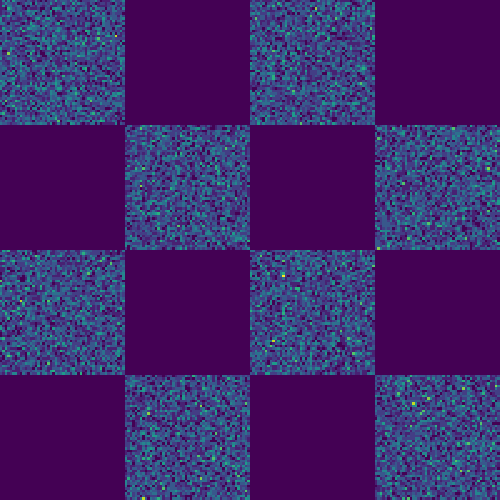}
    &
    \includegraphics[width=20mm]{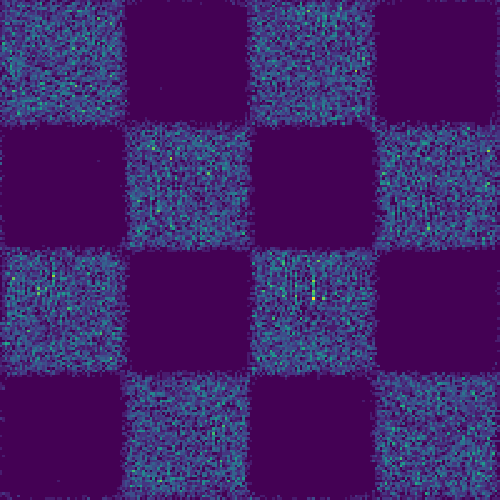}
    &
    \includegraphics[width=20mm]{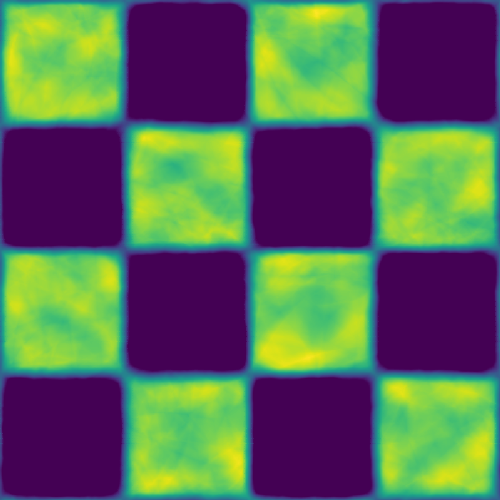} 
    \\
    \includegraphics[width=20mm]{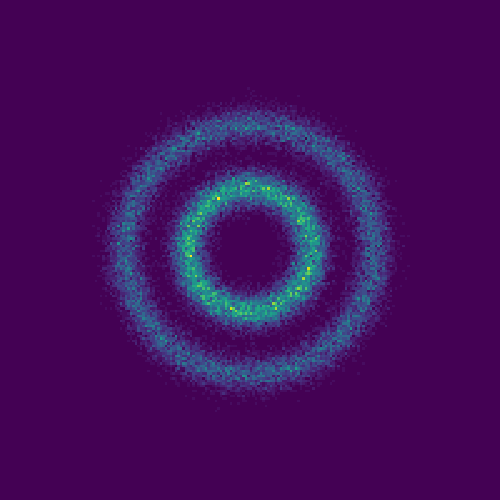}
    &
    \includegraphics[width=20mm]{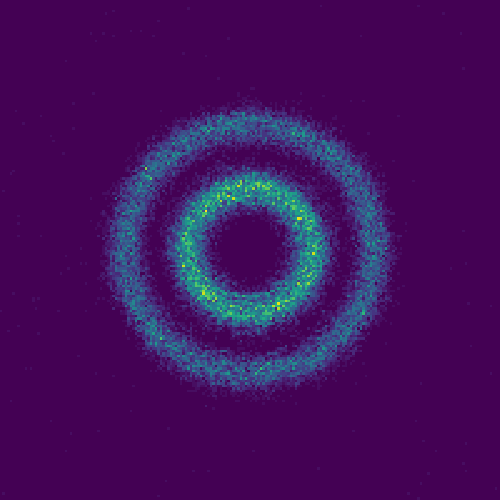}
    &
    \includegraphics[width=20mm]{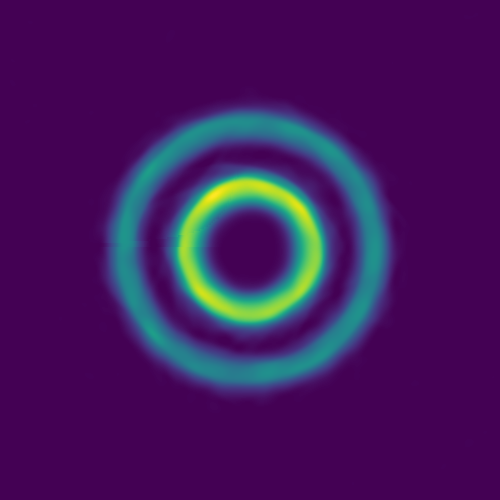} 
    & \hspace{0.15cm}
    \includegraphics[width=20mm]{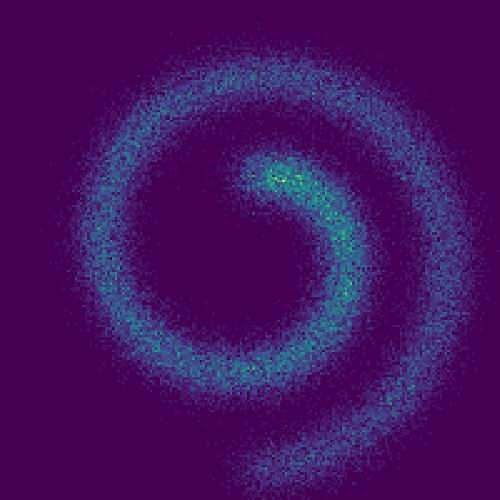}
    &
    \includegraphics[width=20mm]{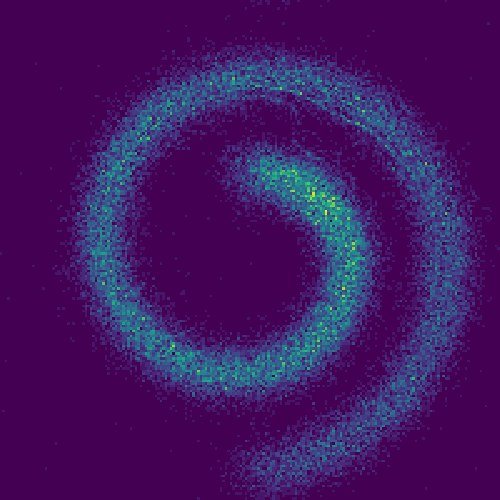}
    &
    \includegraphics[width=20mm]{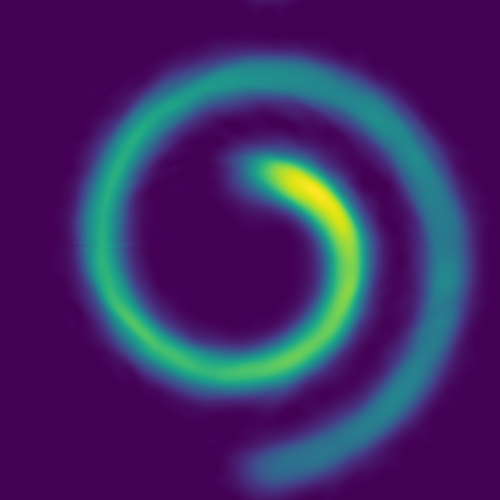} 
    \\
    \includegraphics[width=20mm]{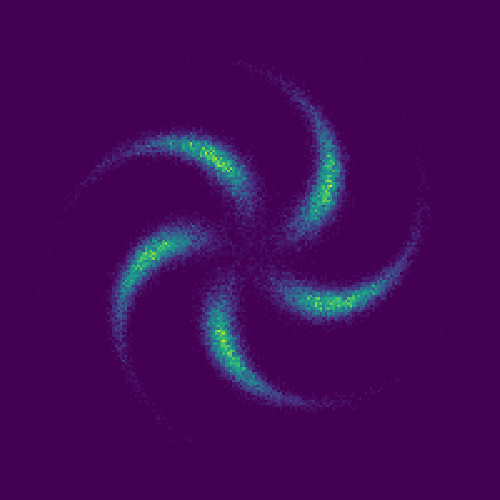}
    &
    \includegraphics[width=20mm]{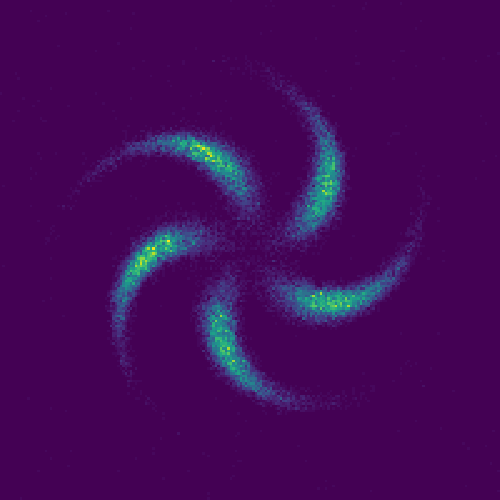}
    &
    \includegraphics[width=20mm]{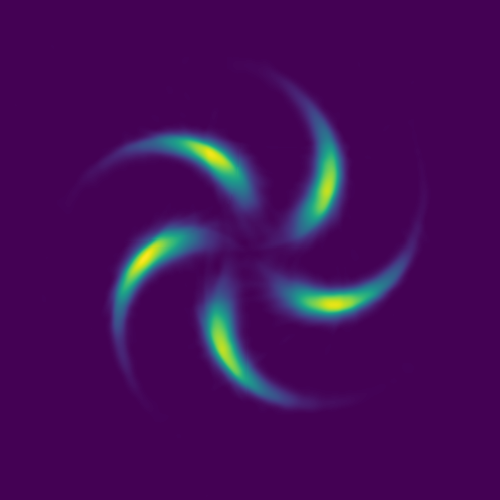} 
    & \hspace{0.15cm}
    \includegraphics[width=20mm]{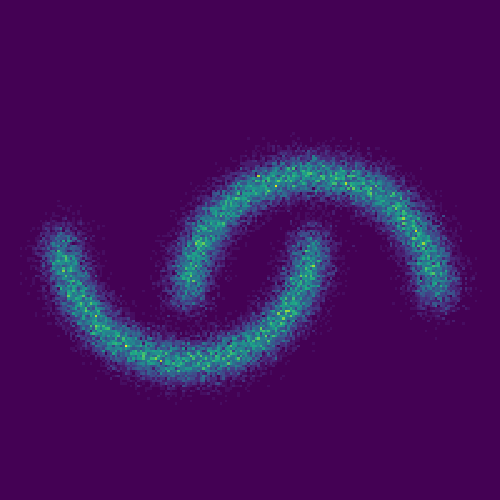}
    &
    \includegraphics[width=20mm]{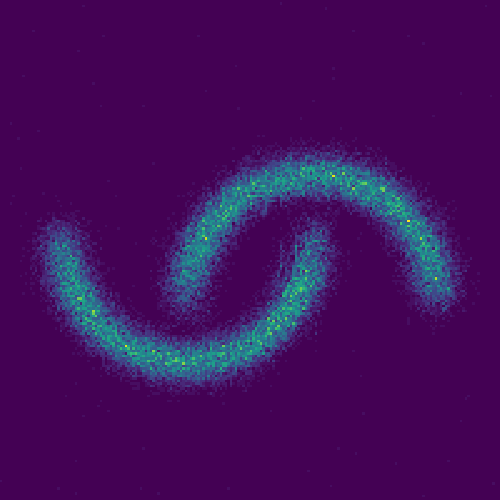}
    &
    \includegraphics[width=20mm]{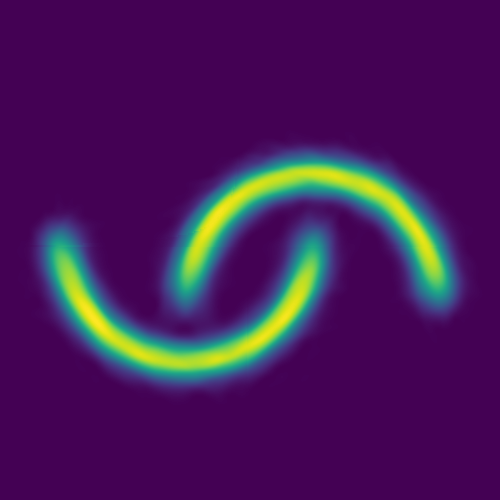} 
    \\
    \includegraphics[width=20mm]{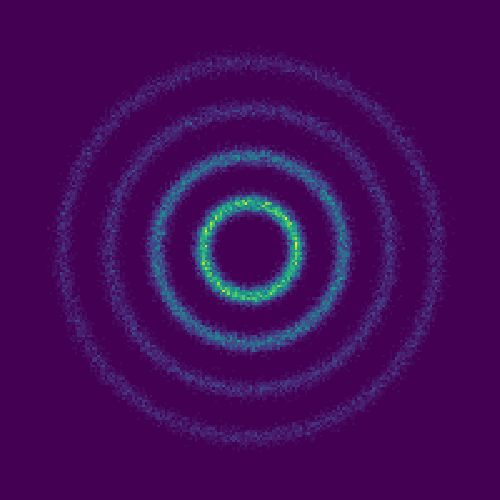}
    &
    \includegraphics[width=20mm]{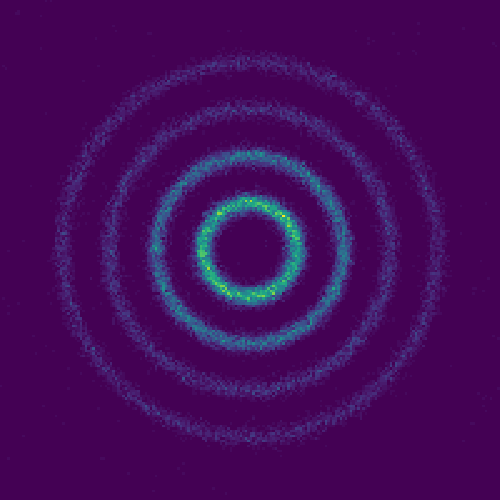}
    &
    \includegraphics[width=20mm]{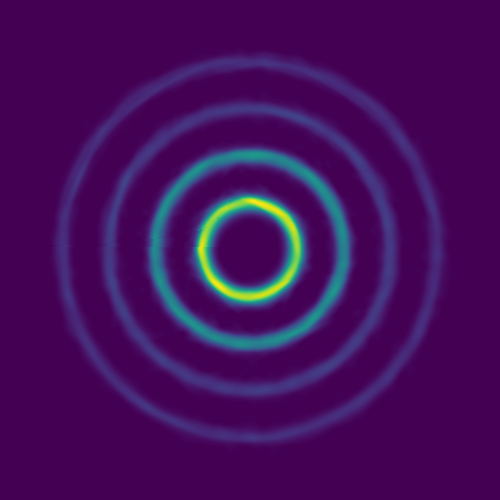} 
    & \hspace{0.15cm} 
    \includegraphics[width=20mm]{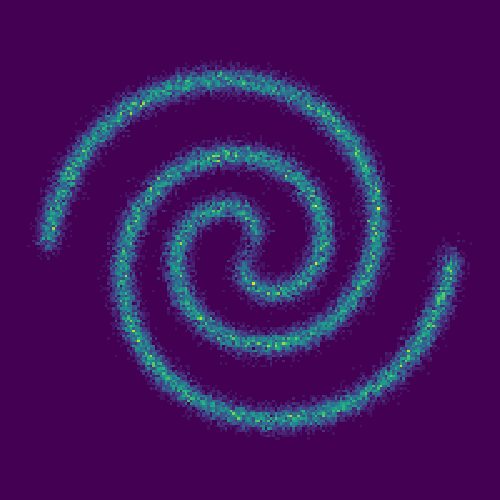}
    &
    \includegraphics[width=20mm]{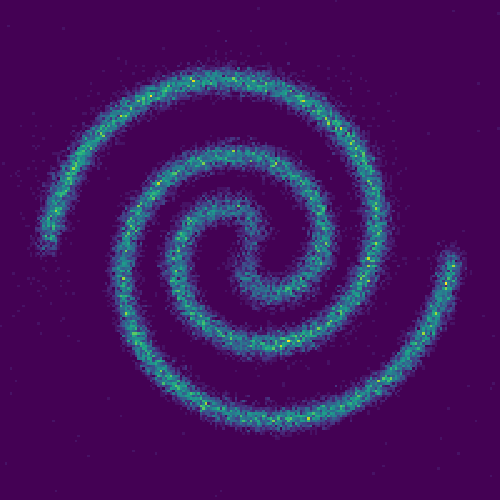}
    &
    \includegraphics[width=20mm]{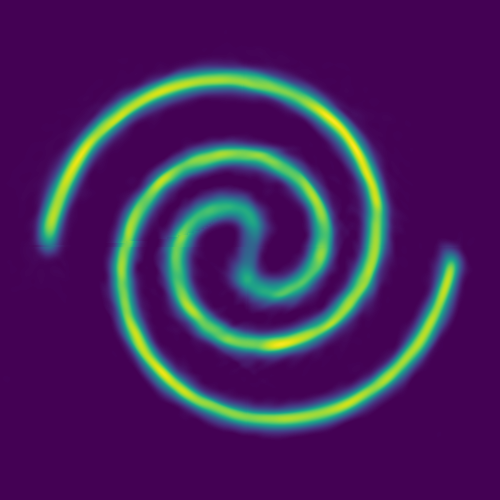} 
\end{tabular}
\caption{Moser Flow trained on 2D datasets. We show generated samples and learned density $\bar{\mu}_+$.\vspace{-10pt} }\label{fig:toy}
\end{figure*}

\section{Experiments}
In all experiments, we modeled a manifold vector field as a multi-layer perceptron (MLP) $\vu_\theta \in \mathfrak{X}(\gM)$, with parameters $\theta$. All models were trained using Adam optimizer \citep{kingma2014adam}, and in all neural networks the activation is Softplus with $\beta=100$. Unless stated otherwise, we set $\lambda_+=0$. We used an exact calculation of the divergence $\mathrm{div}_E(\vu(\vx))$.
%
%
We experimented with two kinds of manifolds.

\textbf{Flat Torus.}
To test our method on Euclidean 2D data, we used $\gM$ as the flat torus, that is the unit square $[-1, 1]^2$ with opposite edges identified. This defines a manifold with no boundary which is locally isometric to the Euclidean plane. Due to this local isometry the Riemannian divergence on the flat torus is equivalent to the Euclidean divergence, $\mathrm{div}=\mathrm{div}_E$. To make $\vu_\theta$ a well defined smooth vector field in $\gM$ we use periodic positional encoding, namely $\vu_\theta(\vx) = \vv_\theta(\tau(\vx))$, where $\vv_\theta$ is a standard MLP and $\tau:\Real^2\too\Real^{4k}$ is defined as $\tau(\vx)=(\cos(\omega_1\pi \vx), \sin(\omega_1\pi \vx), ..., \cos(\omega_k\pi \vx), \sin(\omega_k\pi \vx)) $, where $w_i=i$, and $k$ is a hyper-parameter that is application dependent. 
Since any periodic function can be approximated by a polynomial acting on $e^{i\pi \vx}$, even for $k=1$ this is a universal model for continuous functions on the torus. As described by \cite{tancik2020fourier}, adding extra features can help with learning higher frequencies in the data. To solve an ODE on the torus we simply solve it for the periodic function and wrap the result back to  $[-1, 1]^2$.

\textbf{Implicit surfaces.}
We experiment with surfaces as submanifolds of $\Real^3$. We represent a surface as the zero level set of a Signed Distance Function (SDF) $f:\R^3\rightarrow\R$. We experimented with two surfaces. First, the sphere, represented with the SDF $f(\vx)=\norm{\vx} - 1$, and second, the Stanford Bunny surface, representing a general curved surface and represented with an SDF learned with \citep{gropp2020implicit} from point cloud data. 
To model vector fields on an implicit surface we follow the general \eqref{e:vu}, where for SDFs
\begin{equation*}
    \pi(\vx)= \vx-f(\vx)\nabla f(\vx), \quad \text{and }\ \mP_\vx = \mI - \nabla f(\vx)\nabla f(\vx)^T.
\end{equation*} 
In the supplementary, we detail how to replace the global projection $\pi(\vx)$ with a local one, for cases the SDF is not exact. 
\vspace{-5pt}

\subsection{Toy distributions}\vspace{-5pt}
First, we consider a collection of challenging toy 2D datasets explored in prior works \citep{chen2020residual,huang2021convex}. We scale samples to lie in the flat torus $[-1,1]^2$ and use $k=1$ for the positional encoding. Figure \ref{fig:toy} depicts the input data samples, the generated samples \emph{after} training, and the learned distribution $\bar{\mu}$. In the top six datasets, the MLP architecture used for Moser Flows consists of 3 hidden layers with 256 units each, whereas in the bottom two we used 4 hidden layers with 256 neurons due to the higher complexity of these distributions. We set $\lambda_-=2$.

\begin{figure*}
\centering
\begin{tabular}{@{\hskip0pt}c@{\hskip0pt}c@{\hskip0pt}c@{\hskip 5 pt}cc@{\hskip2.5pt}c@{\hskip0pt}c@{\hskip5pt}c@{\hskip0pt}}
    &$\mu_+$ & $\Phi_*\nu$ & $\abs{\Phi_*\nu - \mu_+}$ & \hspace{0.15cm} &$\mu_+$ & $\Phi_*\nu$ & $\abs{\Phi_*\nu - \mu_+}$ \\
    \rotatebox{90}{\quad $\lambda=1$} &
    \includegraphics[width=20mm]{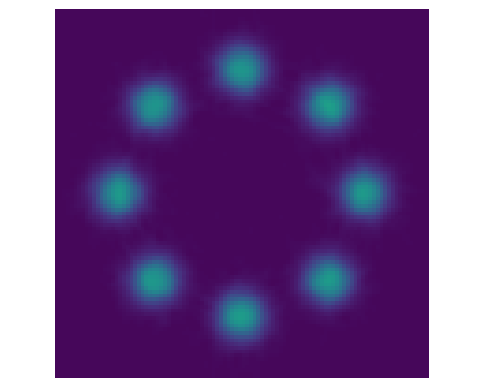}
    &
    \includegraphics[width=20mm]{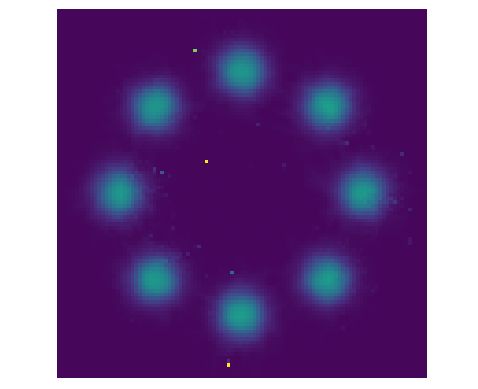}
    &
    \includegraphics[width=20mm]{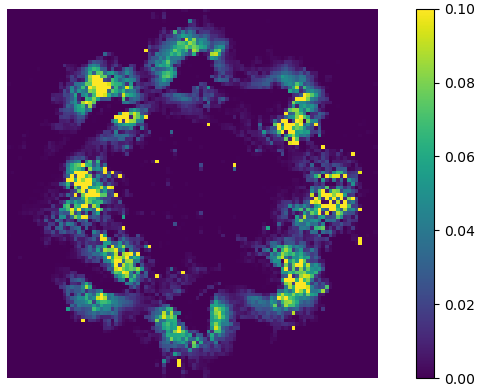}
    & \rotatebox{90}{\quad $\lambda=2$} &
    \includegraphics[width=20mm]{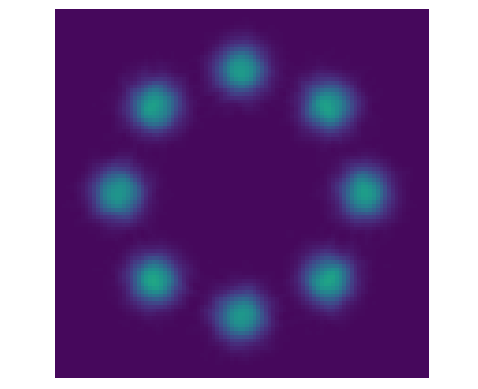}
    &
    \includegraphics[width=20mm]{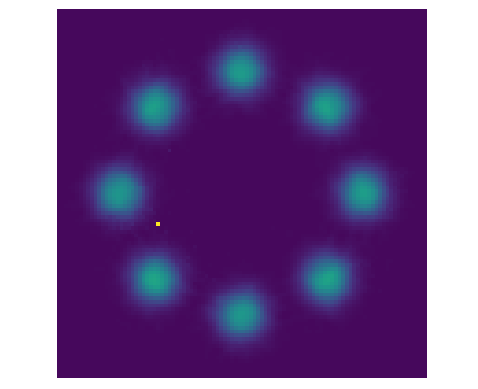}
    &
    \includegraphics[width=20mm]{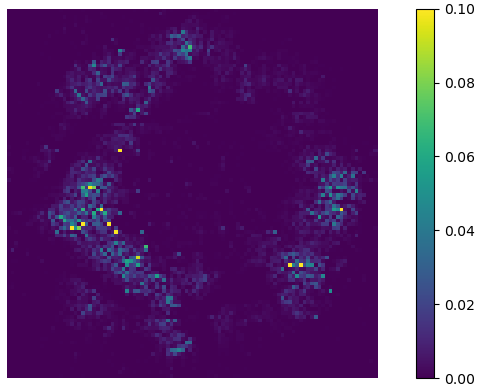}
    \\
    \rotatebox{90}{\quad $\lambda=10$} &
    \includegraphics[width=20mm]{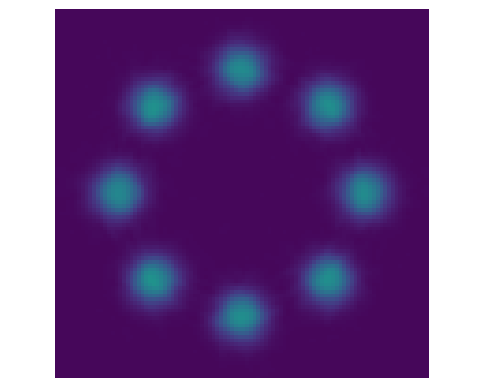}
    &
    \includegraphics[width=20mm]{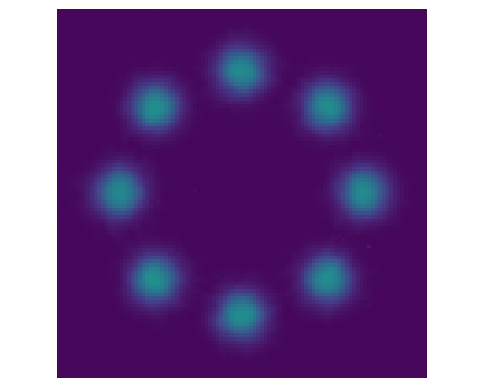}
    &
    \includegraphics[width=20mm]{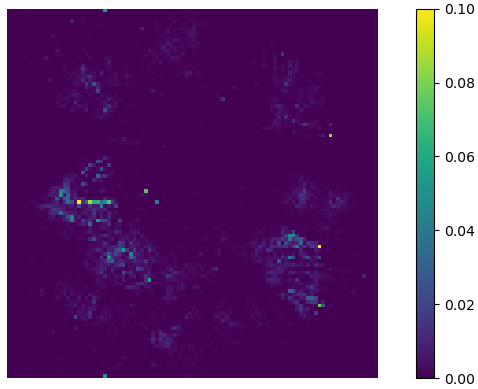}
    & \rotatebox{90}{\quad $\lambda=100$} &
    \includegraphics[width=20mm]{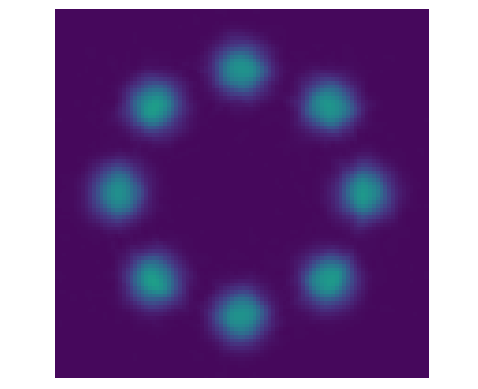}
    &
    \includegraphics[width=20mm]{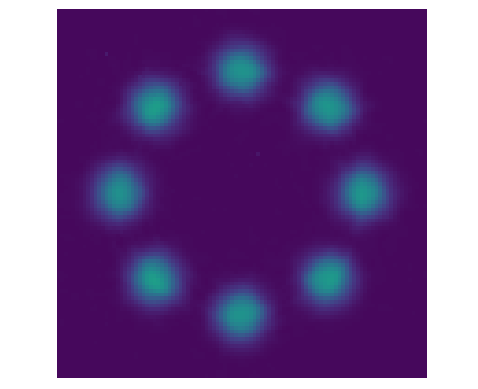}
    &
    \includegraphics[width=20mm]{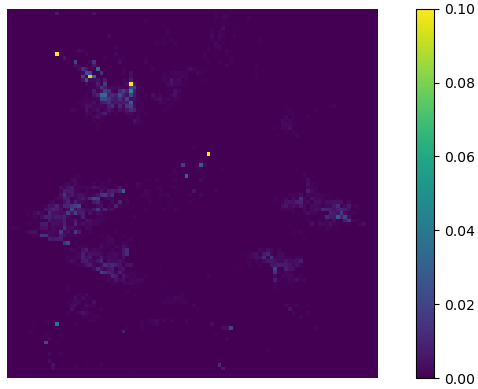}
\end{tabular}
\caption{As $\lambda$ is increased, the closer $\bar{\mu}_+$ is to the generated density $\Phi_*\nu$; column titled $\abs{\bar{\mu}_+ - \Phi_*\nu}$ shows the absolute pointwise difference between the two; note that some of the errors in the $\abs{\bar{\mu}_+ - \Phi_*\nu}$ column are due to numerical inaccuracies in the ODE solver used to calculate $\Phi_*\nu$. }\label{fig:lambdas}
\end{figure*}

\subsection{Choice of hyper-parameter $\lambda$}
We test the effect of the hyper-parameter $\lambda\geq 1$ on the learned density. Figure \ref{fig:lambdas} shows, for different values of $\lambda$, our density estimation $\mu_+$, the push-forward density $\Phi_*\nu$, and their absolute difference. To evaluate $\Phi_*\nu$ from the vector field $v_t$, we solve an ODE as advised in \cite{grathwohl2018ffjord}. 
As expected, higher values of $\lambda$ lead to closer modeled density $\bar{\mu}_+$ and $\Phi_*\nu$. This is due to the fact that a higher value of $\lambda$ leads to a lower value of $\int_\gM\bar{\mu}_-$, meaning $\bar{\mu}$ is a better representation of a probability density. Nonetheless, even for $\lambda=1$ the learned and generated density are rather consistent. \vspace{-5pt}


\begin{figure}[t]
    \centering
    \begin{tabular}{@{\hskip0pt}c@{\hskip1.0pt}c@{\hskip1.5pt}c@{\hskip1.5pt}c@{\hskip1.5pt}c@{\hskip5.0pt}c@{\hskip0.0pt}}
         & density 1k & samples 1k & density 5k & samples 5k & \\
    \rotatebox{90}{\quad Moser Flow} &
    \includegraphics[width=0.17\columnwidth]{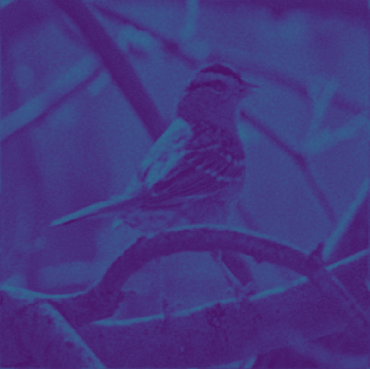}
         &
         \includegraphics[width=0.17\columnwidth]{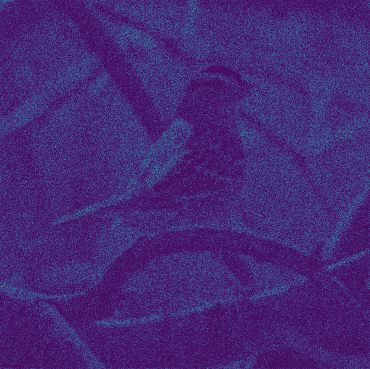}
         & 
         \includegraphics[width=0.17\columnwidth]{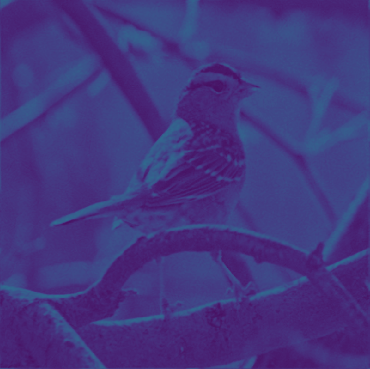}
         &
         \includegraphics[width=0.17\columnwidth]{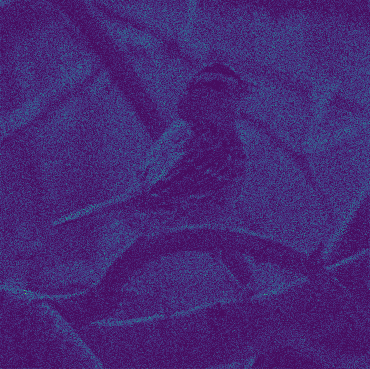}
         & 
         \includegraphics[width=0.25\columnwidth]{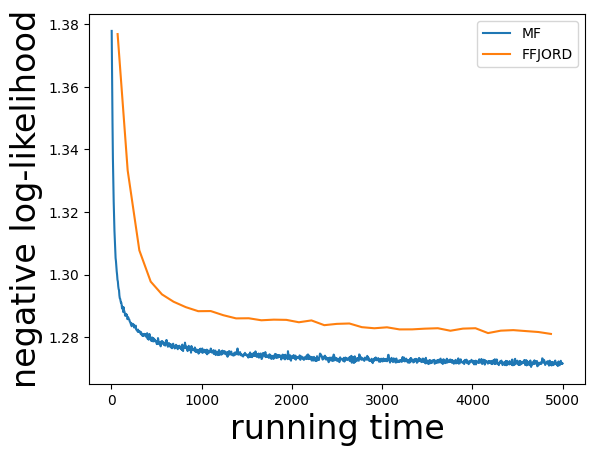} 
         \\
          \rotatebox{90}{\qquad  FFJORD}  &
          \includegraphics[width=0.17\columnwidth]{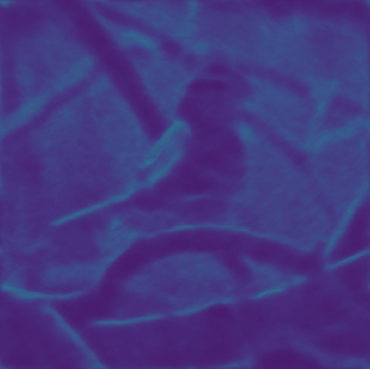}
         &
         \includegraphics[width=0.17\columnwidth]{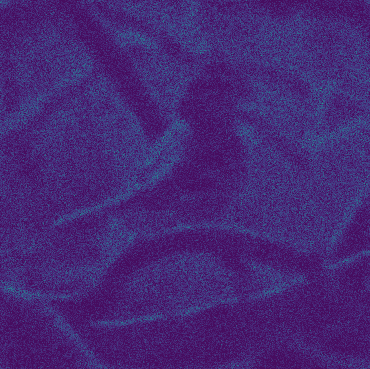}
         & 
         \includegraphics[width=0.17\columnwidth]{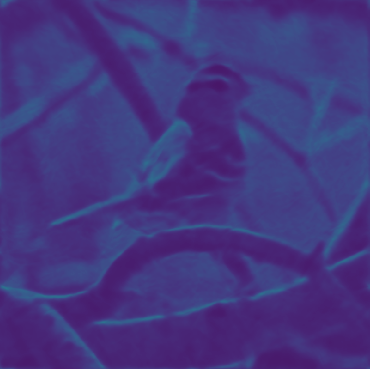}
         &
         \includegraphics[width=0.17\columnwidth]{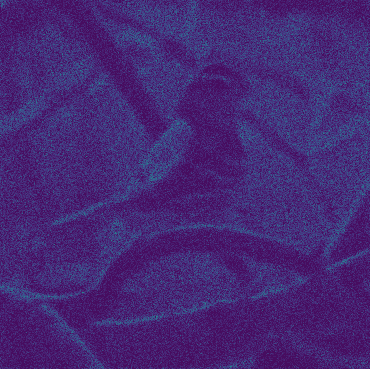}
         & 
         \includegraphics[width=0.25\columnwidth]{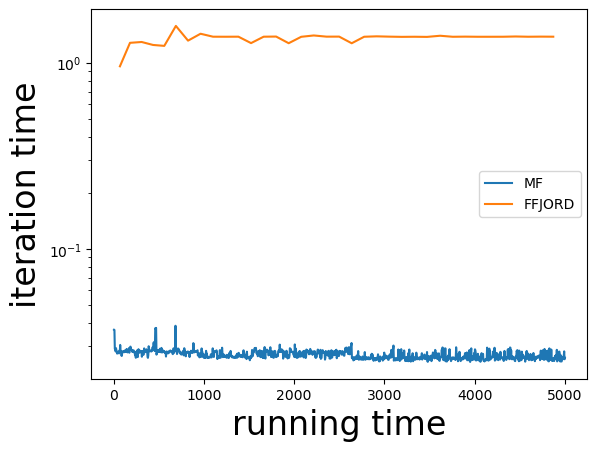}
         \\
    \end{tabular}
    \caption{Comparing learned density and generated samples with MF and FFJORD at different times (in k-sec); top right shows NLL scores for both MF and FFJORD at different times; bottom right shows time per iteration (in $\log$-scale, sec) as a function of total running time (in sec); FFJORD iterations take longer as training progresses; A second example of the same experiment on a different image is provided in the supplementary. Flickr image (license CC BY 2.0): Bird by Flickr user "lakeworth" \url{https://www.flickr.com/photos/lakeworth/46657879995/}. \vspace{-5pt}}
    \label{fig:cameraman}
\end{figure}

\subsection{Time evaluations}\vspace{-5pt}
To compare our method to Euclidean CNF methods, we compare Moser Flow with FFJORD~\citep{grathwohl2018ffjord} on the flat torus. We consider a challenging density with high frequencies obtained via 800x600 images (Figure \ref{fig:cameraman}). We generate a new batch every iteration by sampling each pixel location with probability which is proportional to the pixel intensity. 
The architectures of both $\vv_\theta$ and the vector field defined in FFJORD are the same, namely an MLP with 4 hidden layers of 256 neurons each. To capture the higher frequencies in the image we use a positional encoding with $k=8$ for both methods. We used a batch size of 10k. We used learning rate of 1e-5 for Moser Flow and 1e-4 for FFJORD. We used $\lambda_-=2$. Learning was stopped after 5k seconds. Figure \ref{fig:cameraman} presents the results. Note that Moser Flow captures high-frequency details better than FFJORD. This is also expressed in the graph on the top right, showing how the NLL decreases faster for MF than FFJORD. Furthermore, as can be inspected in the per iteration time graph on the bottom-right, MF per iteration time does not increase during training, and is roughly 1-2 order of magnitudes faster than FFJORD iteration.

\begin{figure}[t]
    \centering
    \begin{tabular}{@{\hskip0.0pt}c@{\hskip0.0pt}c@{\hskip0.0pt}c@{\hskip0.0pt}c@{\hskip0.0pt}}
       \includegraphics[width=0.25\columnwidth]{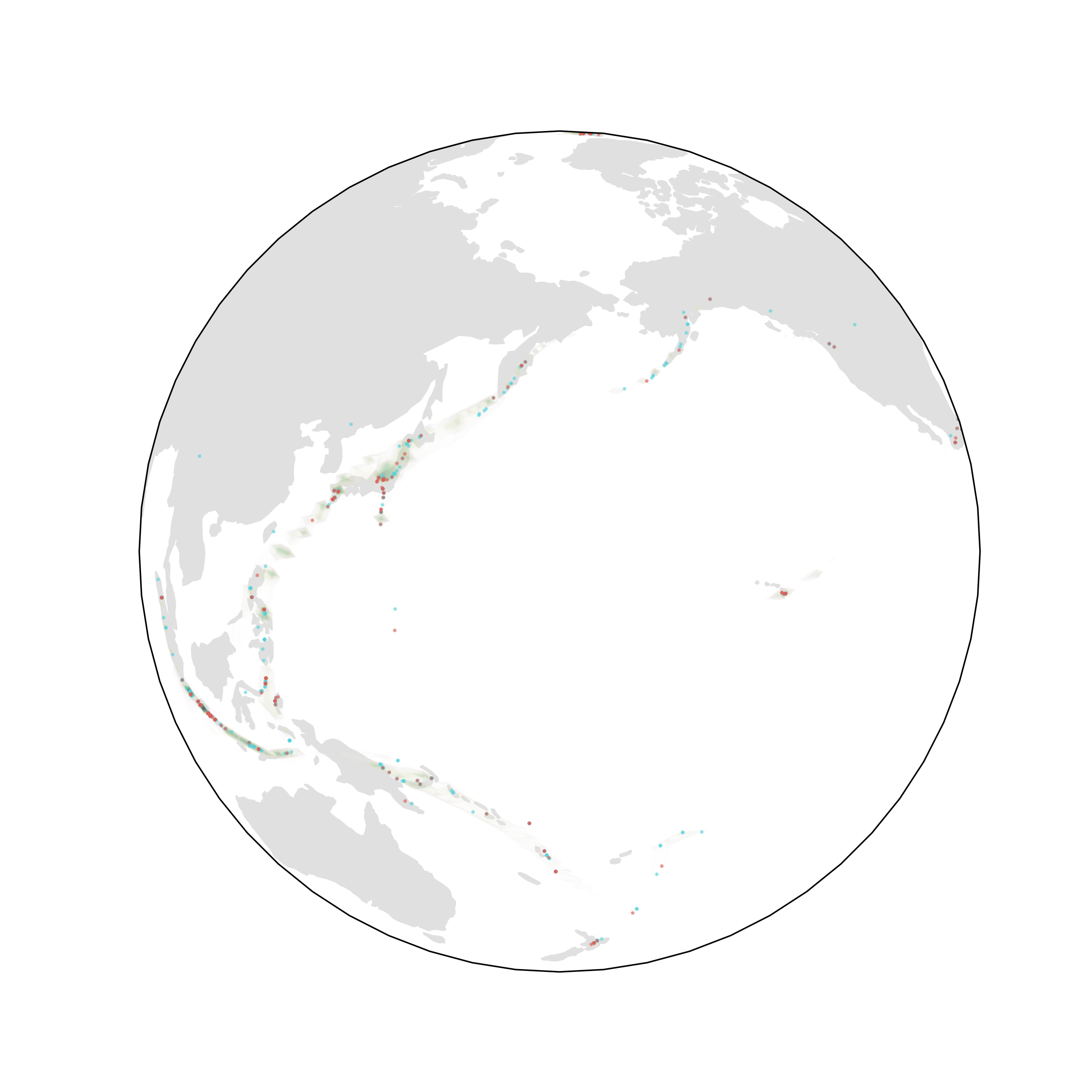}  &  
       \includegraphics[width=0.25\columnwidth]{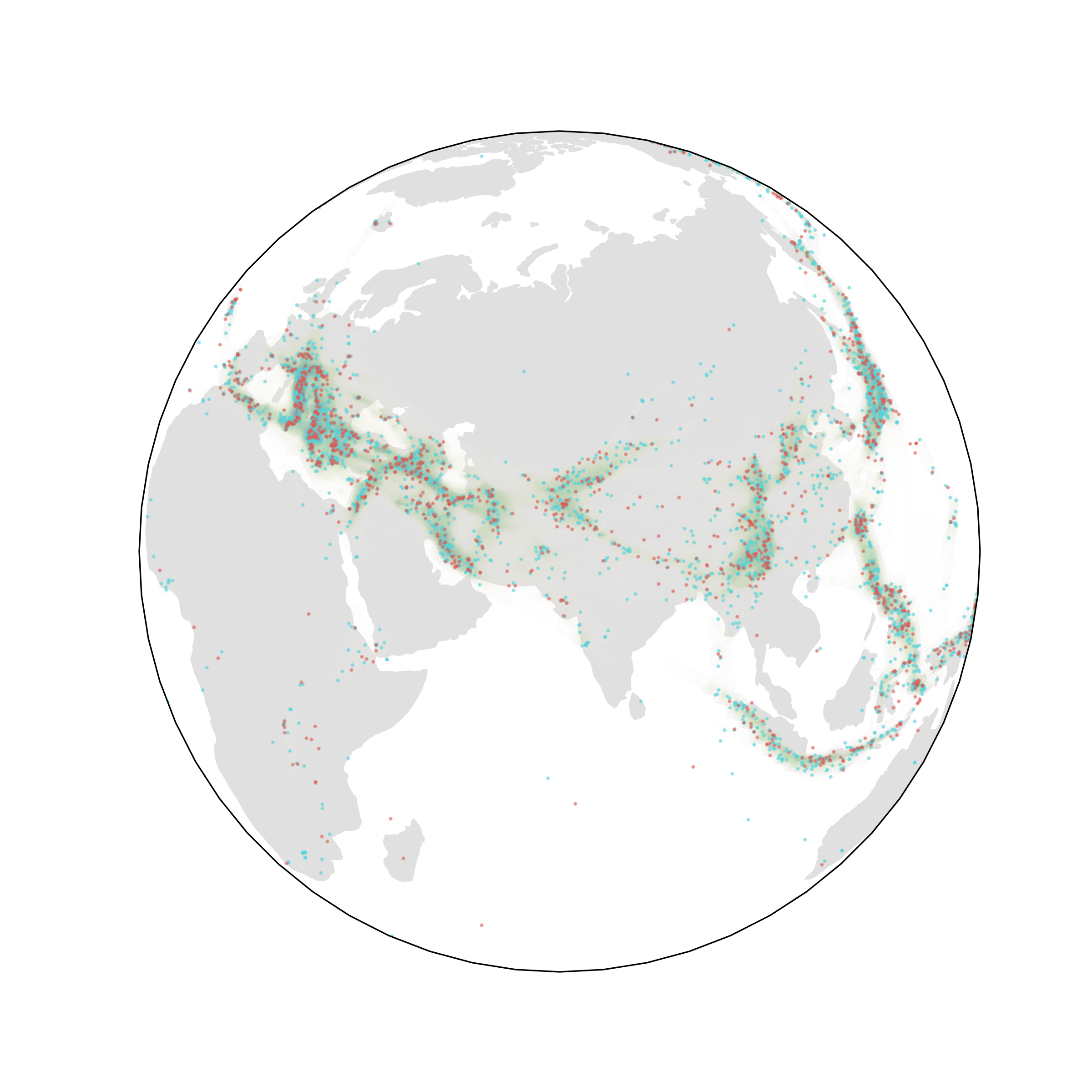} &
       \includegraphics[width=0.25\columnwidth]{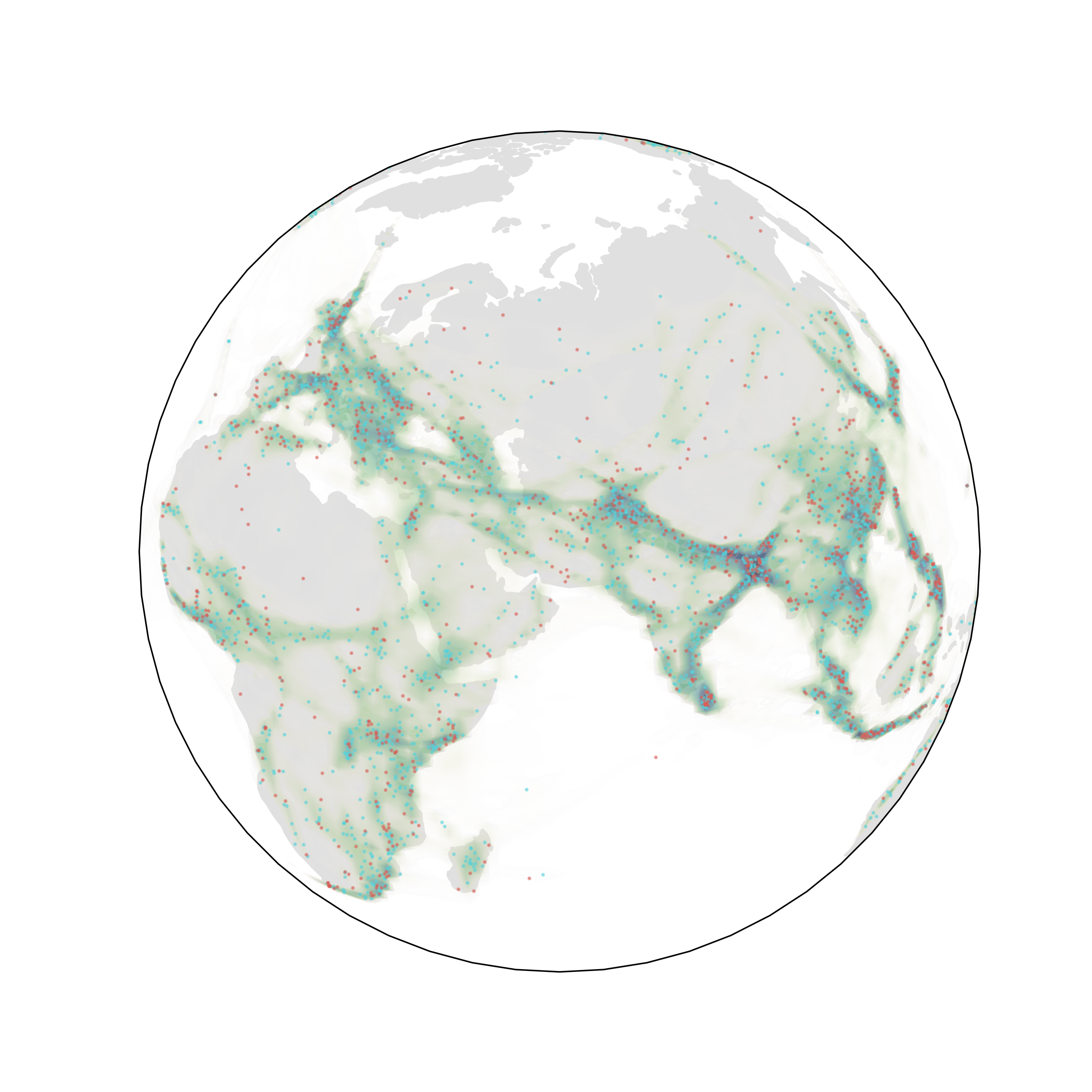} & 
       \includegraphics[width=0.25\columnwidth]{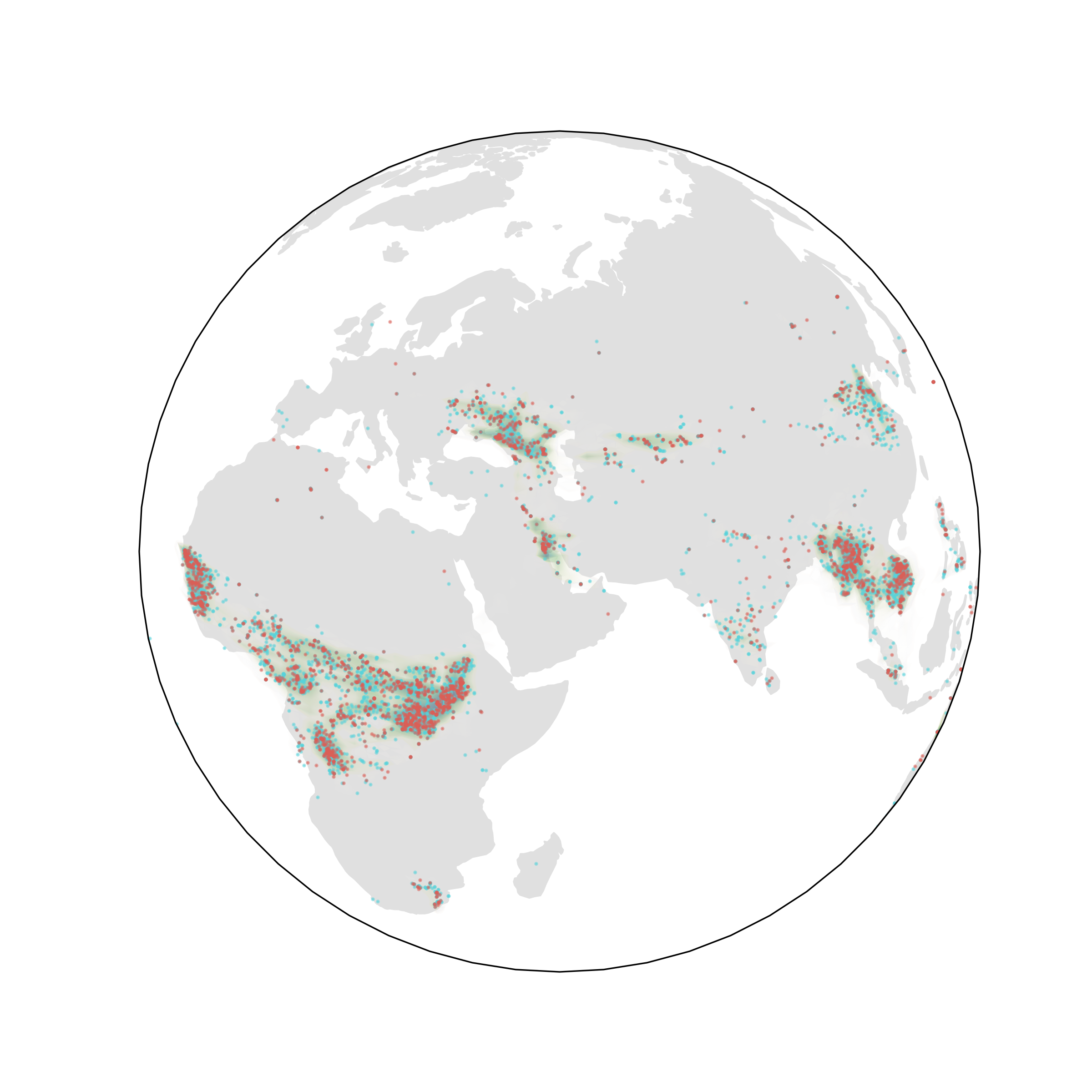} \vspace{-10pt} \\
       Volcano & Earthquake & Flood & Fire
    \end{tabular}
    \caption{Moser Flow trained on earth sciences data gathered by \cite{mathieu2020riemannian}. The learned density is colored green-blue (blue indicates larger values); Blue and red dots represent training and testing datapoints, respectively. See Table \ref{tab:earth} for matching quantitative results. \vspace{-5pt}  }
    \label{fig:earth}
\end{figure}

\begin{table}[t]
    \centering
    \resizebox{0.6\columnwidth}{!}{%
    \begin{tabular}{ccccc}
   & Volcano & Earthquake & Flood & Fire \\
 \hline
 Mixture vMF & $-0.31_{\pm 0.07}$ & $0.59_{\pm 0.01}$ & $1.09_{\pm 0.01}$ & $-0.23_{\pm 0.02}$ \\
 Stereographic & $-0.64_{\pm 0.20}$ & $0.43_{\pm 0.04}$ & $0.99_{\pm 0.04}$ & $-0.40_{\pm 0.06}$ \\
 Riemannian & $-0.97_{\pm 0.15}$ & $0.19_{\pm 0.04}$ & $0.90_{\pm 0.03}$ & $-0.66_{\pm 0.05}$\\
 Moser Flow (MF) & $\mathbf{-2.02}_{\pm 0.42}$ & $\mathbf{-0.09}_{\pm 0.02}$ & $\mathbf{0.62}_{\pm 0.04}$ & $\mathbf{ -1.03}_{\pm 0.03}$ \\
 \hline
 Data size & $829$ & $6124$ & $4877$ & $12810$ \\
 \hline \vspace{1pt}
\end{tabular}}
    \caption{Negative log-likelihood scores of the earth sciences datasets.}
    \label{tab:earth}
\end{table}

\subsection{Earth and climate science data}\vspace{-0pt}
We evaluate our model on the earth and climate datasets gathered in \cite{mathieu2020riemannian}. The projection operator $\pi$ in this case is simply $\pi(\vx)=\frac{\vx}{\norm{\vx}}$. We parameterize $\vv_\theta$ as an MLP with 6 hidden layers of 512 neurons each. We used full batches for the NLL loss and batches of size 150k for our integral approximation. We trained for 30k epochs, with learning rate of 1e-4. We used $\lambda_-=100$. The quantitative NLL results are reported in Table \ref{tab:earth} and qualitative visualizations in \ref{fig:earth}. Note that we produce NLL scores smaller than the runner-up method by a large margin.

\subsection{Curved surfaces}\vspace{-5pt}
We trained an SDF $f$ for the Stanford Bunny surface $\gM$ using the method in \cite{gropp2020implicit}. 
To generate uniform ($\nu$) and data ($\mu$) samples over $\gM$ we first extract a mesh $\gM'$ from $f$ using the Marching Cubes algorithm \citep{lorensen1987marching} setting its resolution to 100x100x100. Then, to randomly choose a point uniformly from $\gM'$ we first randomly choose a face of the mesh with probability proportional to its area, and then randomly choose a point uniformly within that face. For target $\mu$ we used clamped manifold harmonics to create a sequence of densities with increased complexity. To that end, we first computed the $k$-th eigenfunction of the Laplace-Beltrami operator over $\gM'$ (we provide details on this computation in the supplementary), for the frequencies (eigenvalues) $k\in\set{10,50,500}$. Next, we sampled the eigenfunctions at the faces' centers, clamped their negative values, and normalized to get discrete probability densities over the faces of $\gM'$. Then, to sample a point, we first choose a face at random based on this probability, and then random a point uniformly within that face. 
We take 500k i.i.d.~samples of this distribution as our dataset.
We take $\vv_\theta$ to be an MLP with 6 hidden layers of dimension 512. We use batch size of 10k for both the NLL loss and for the integral approximation; we ran for 1000 epochs with learning rate of 1e-4. We used $\lambda_-=\lambda_+=1$. Figure \ref{fig:bunny} depict the results. Note that Moser Flow is able to learn the surface densities for all three frequencies. \vspace{-5pt}

\section{Related Work}\vspace{-5pt}




In the following, we discuss related work on normalizing flows for
manifold-valued data. On a high level, such methods can be divided into
\emph{Parametric} versus \emph{Riemannian} methods. Parametric methods consist of a normalizing flow in the Euclidean space $\R^n$, pushed-forward onto the manifold through an invertible map $\psi:\Real^n \rightarrow \gM$. However, to globally represent the manifold, $\psi$ needs to be a homeomorphism implying  that $\gM$ and $\R^n$ are topologically equivalent, limiting the scope of that
approach. Existing methods in this class are often based on the exponential map
$\exp_x: T_{x}\gM \cong \R^n \to \gM$ of a manifold. This leads to the so called
\emph{wrapped} distributions. This approach has been taken, for instance, by \cite{falorsi2019Reparameterizing} and \cite{bose2020Latent} to parametrize
probability distributions on Lie groups and hyperbolic space. However, Parametric methods based on the exponential map often lead to numerical and computational
challenges. For instance, in compact manifolds (\eg, spheres or the
$\text{SO}(3)$ group) computing the density of \emph{wrapped} distributions
requires an infinite summation. On the hyperboloid, on the other hand, the
exponential map is numerically not well-behaved far away from the origin
\citep{dooley1993Harmonic,al-mohy2010New}.

In contrast to Parametric methods, Riemannian methods operate directly on the
manifold itself and, as such, avoid numerical instabilities that arise from the
mapping onto the manifold. Early work in this class of models proposed transformations along
geodesics on the hypersphere by evaluating the exponential map at the gradient
of a scalar manifold function \citep{sei2011Jacobian}.
\cite{rezende2020normalizing} introduced \emph{discrete} \emph{Riemannian} flows
for hyperspheres and torii based on M\"{o}bius transformations and spherical
splines. \citet{mathieu2020riemannian} introduced \emph{continuous}
flows on general Riemannian manifolds (RCNF). In contrast to discrete flows
\citep[\eg,][]{bose2020Latent,rezende2020normalizing}, such time-continuous
flows alleviate the previous topological constraints by parametrizing the flow as the solution to an ODE over the manifold \citep{grathwohl2018ffjord}.
Concurrently to RCNF, \cite{lou2020neural} and \cite{falorsi2020Neural}
proposed related extensions of neural ODEs to smooth manifolds. Moser Flow also generates a CNF, however by limiting the flow space (albeit, not the generated distributions) it allows expressing the learned distribution as the divergence of a vector field. \vspace{-5pt}

\begin{figure}[t]
    \centering
    \begin{tabular}{@{\hskip0.0pt}c@{\hskip0.0pt}c@{\hskip1.0pt}|@{\hskip1.0pt}c@{\hskip0.0pt}c@{\hskip1.0pt}|@{\hskip1.0pt}c@{\hskip0.0pt}c@{\hskip0.0pt}}
       \includegraphics[width=0.16\columnwidth]{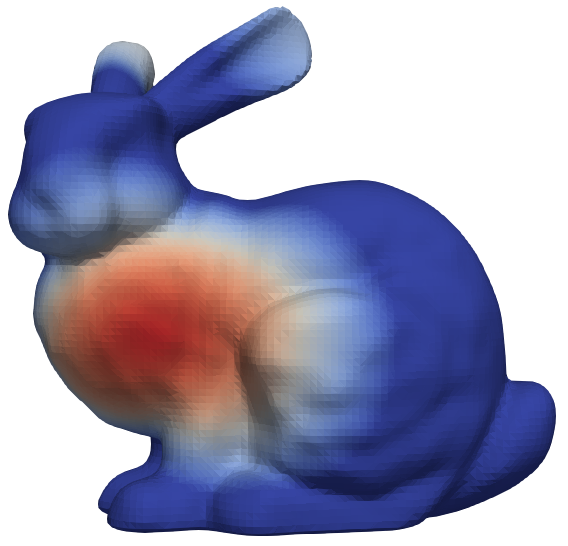}  &
       \includegraphics[width=0.16\columnwidth]{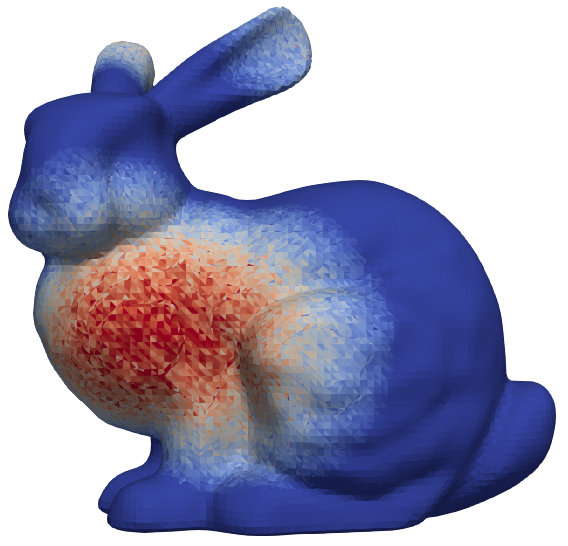}  &
       \includegraphics[width=0.16\columnwidth]{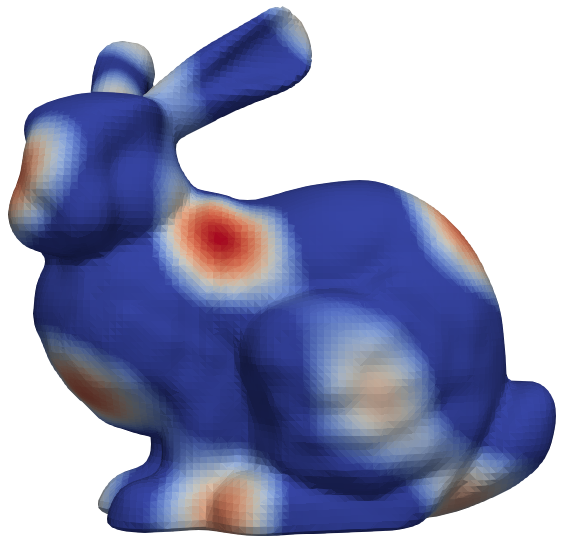}  &
       \includegraphics[width=0.16\columnwidth]{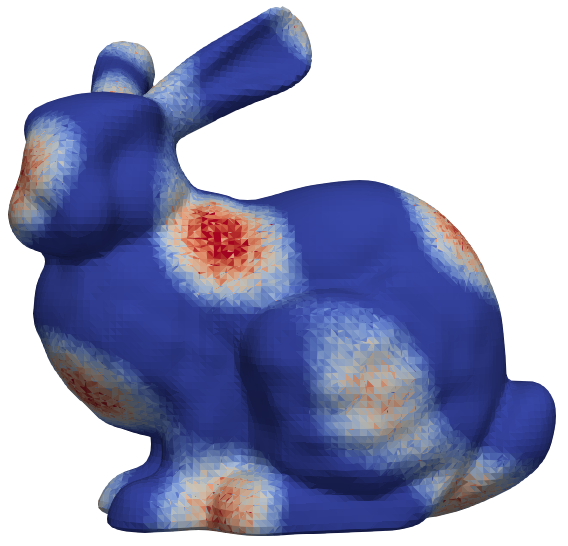}  &
       \includegraphics[width=0.16\columnwidth]{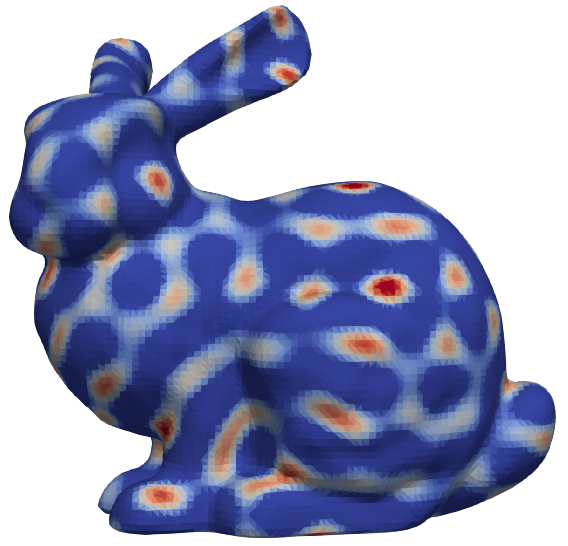}  &
       \includegraphics[width=0.16\columnwidth]{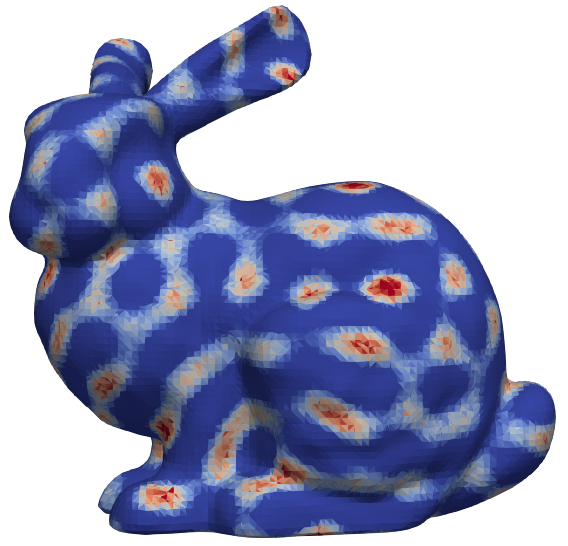}  \\
       \includegraphics[width=0.16\columnwidth]{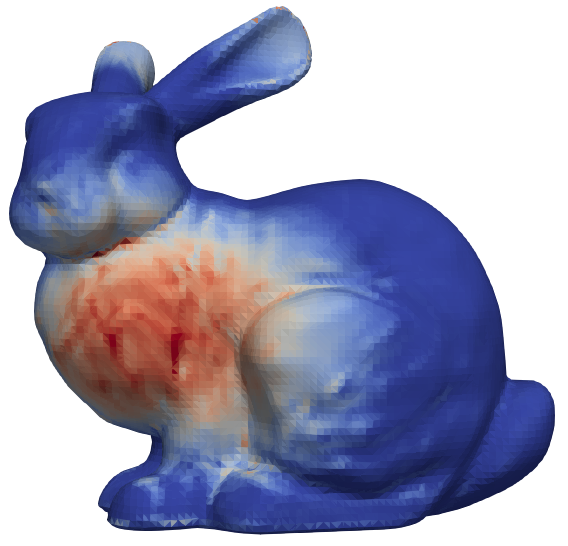} &
       \includegraphics[width=0.16\columnwidth]{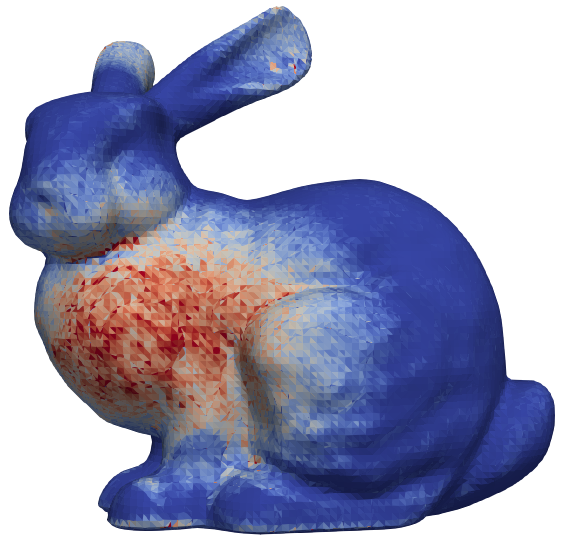}  &
       \includegraphics[width=0.16\columnwidth]{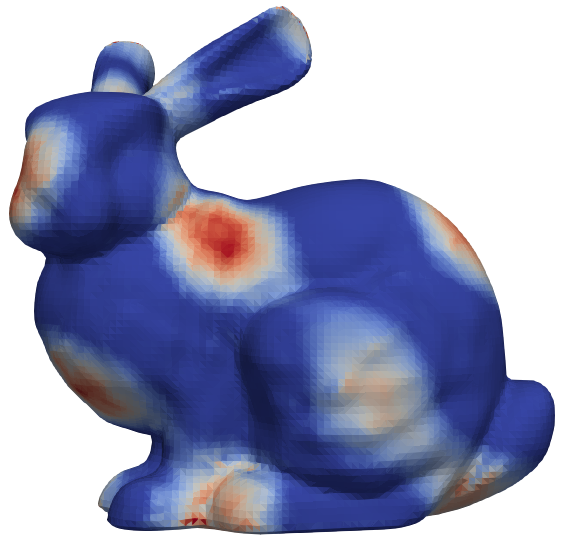}  &
       \includegraphics[width=0.16\columnwidth]{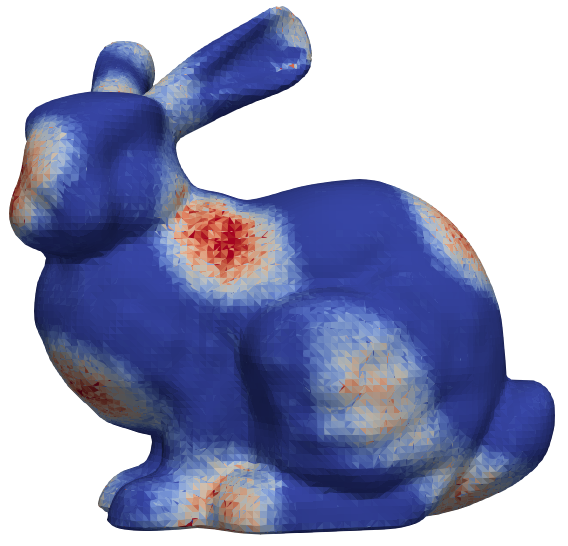}  &
       \includegraphics[width=0.16\columnwidth]{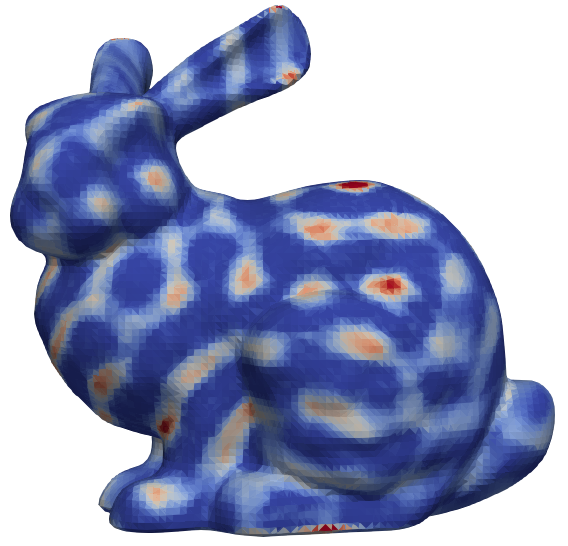}  &
       \includegraphics[width=0.16\columnwidth]{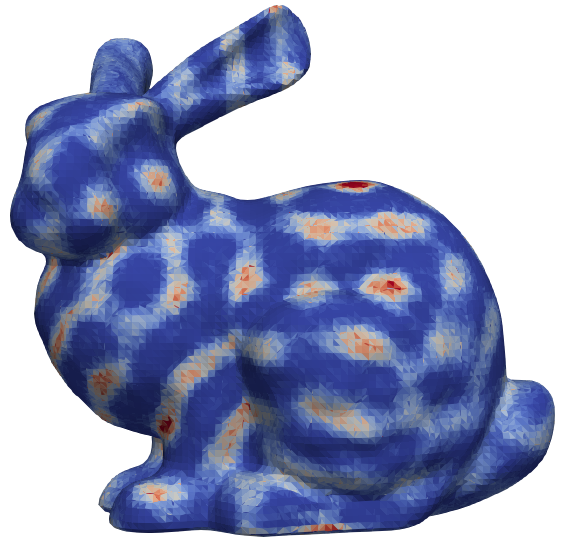}  \\
       \multicolumn{2}{c}{Frequency $k=10$} & \multicolumn{2}{c}{Frequency $k=50$} & \multicolumn{2}{c}{Frequency $k=500$}
    \end{tabular}
    \caption{Moser Flow trained on a curved surface (Stanford Bunny). We show three different target distribution with increasing frequencies, where for each frequency we depict (clockwise from top-left): target density, data samples, generated samples, and learned density. \vspace{-10pt}}
    \label{fig:bunny}
\end{figure}

\section{Discussion and limitations}\vspace{-5pt}
We introduced Moser Flow, a generative model in the family of CNFs that represents the target density using the divergence operator applied to a vector valued neural network. The main benefits of MF stems from the simplicity and locality of the divergence operator. MFs circumvent the need to solve an ODE in the training process, and are thus applicable on a broad class of manifolds. Theoretically, we prove MF is a universal generative model, able to (approximately) generate arbitrary positive target densities from arbitrary positive prior densities. Empirically, we show MF enjoys favorable computational speed in comparison to previous CNF models, improves density estimation on spherical data compared to previous work by a large margin, and for the first time facilitate training a CNF over a general curved surface.

One important future work direction, and a current limitation, is scaling of MF to higher dimensions. This challenge can be roughly broken to three parts: First, the model probabilities $\bar{\mu}$ should be computed/approximated in log-scale, as probabilities are expected to decrease exponentially with the dimension. Second, the variance of the numerical approximations of the integral $\int_\gM \bar{\mu}_- d\vol$ will increase significantly in high dimensions and needs to be controlled. Third, the divergence term, $\mathrm{div}(u)$, is too costly to be computed exactly in high dimensions and needs to be approximated, similarly to other CNF approaches.  
%
Finally, our work suggests a novel generative model, and similarly to other generative models can be potentially used for generation of fake data and amplify harmful biases in the dataset. Mitigating such harms is an active and important area of ongoing research.

\newpage
\section*{Acknowledgments}
NR is supported by the European Research Council (ERC Consolidator Grant, "LiftMatch" 771136), the Israel Science Foundation (Grant No. 1830/17), and Carolito Stiftung (WAIC).

\bibliography{2021_moser}
\bibliographystyle{apalike} 

\appendix
\section*{\Large{Supplementary Material}}

\appendix
\section{Proof of Moser's Theorem.}  We will review here the proof of Moser Theorem \ref{thm:moser}; for more details see Moser's original paper \citep{moser1965volume} or \cite{lang2012fundamentals}, Chapter 18 section 2.
Let $\hat{\alpha}_t=\alpha_t dV$ be the time-dependent volume form over $\gM$ corresponding to the density interpolant $\alpha_t$. Note that $\int_\gM \hat{\alpha}_t=1$.
Moser's idea is to replace \eqref{e:normalizing} with its continuous version: 
\begin{equation}\label{e:cont_normalization}
    \hat{\alpha}_0 = \Phi_t^* \hat{\alpha}_t, \quad t\in [0,1]
\end{equation}
If \eqref{e:cont_normalization} holds for all $t\in[0,1]$ then plugging $t=1$ leads to \eqref{e:normalizing}. Since \eqref{e:cont_normalization} holds trivially for $t=0$ (since $\Phi_0$ is the identity mapping), solving it amounts to asking that $\Phi^*_t \hat{\alpha}_t$ is constant, \ie, 
\begin{equation}\label{e:dt_alpha_t}
\frac{d}{dt}\Phi_t^* \hat{\alpha}_t = 0.    
\end{equation}
The time derivative of $\Phi^*_t\hat{\alpha}_t$ can be computed with the help of the Lie derivative (\eg, Proposition 5.2 in \cite{lang2012fundamentals}): If $\Phi_t$ is the flow corresponding to the time dependent vector field $v_t$ (see \eqref{e:Phi_ode}), and $\omega$ is a differential form then $$\frac{d}{dt}(\Phi_t^* \omega) = \Phi_t^*(\mathfrak{L}_{v_t}\omega),$$ where $\mathfrak{L}$ denotes the Lie derivative. The Lie derivative $\mathfrak{L}_v\omega$ of a smooth vector field $v$ and smooth differential form $\omega$ can be computed using Cartan's "magic formula" (see \eg, Theorem 14.35 in \cite{lee2013smooth}): $$\mathfrak{L}_v \omega = i_v (d\omega) + d(i_v \omega),$$
where $i_v \omega$ is the interior multiplication of a vector field and a differential form defined by $(i_v \omega)(v_2,\ldots,v_n) = \omega(v,v_2,\ldots,v_n)$. In case $\omega$ is an $n$-form (as $\hat{\alpha}_t$ in our case) we have $d\omega=0$ so the first term in the r.h.s.~above vanishes. 
Lastly, we will need the following "trick": 
$$\frac{d}{dt}(\Phi_t^* \hat{\alpha}_t) = \frac{d}{ds}\Big\vert_{s=t}(\Phi_s^* \hat{\alpha}_t) + \frac{d}{ds}\Big\vert_{s=t}(\Phi_t^* \hat{\alpha}_s).$$
Putting the last three equations together we get:
\begin{equation}\label{e:cartan}
  \frac{d}{dt}(\Phi_t^* \hat{\alpha}_t) = \Phi_t^*(\mathfrak{L}_{v_t}\hat{\alpha}_t) + \Phi_t^* \parr{\frac{d}{dt}\hat{\alpha}_t}=\Phi^*_t \parr{d( i_{v_t} \hat{\alpha}_t ) + \frac{d}{dt}\hat{\alpha}_t   }.  
\end{equation}
The theorem is proven if one can show that $v_t\in\mathfrak{X}(\gM)$ exists such that
    $d( i_{v_t} \hat{\alpha}_t ) + \frac{d}{dt}\hat{\alpha}_t=0$.
The divergence operator is defined by the equality $d(i_{w}d\vol)=\mathrm{div}(w)d\vol$, for a vector field $w\in\mathfrak{X}(\gM)$. Therefore $d( i_{v_t} \hat{\alpha}_t ) = \mathrm{div}(\alpha_t v_t)d\vol$. Denote $\hat{\gamma}_t=\frac{d}{dt}\hat{\alpha}_t$. Then we need to show that $v_t\in\mathfrak{\gM}$ exists such that
\begin{equation}\label{e:need_to_show}
    d( i_{v_t} \hat{\alpha}_t ) + \hat{\gamma}_t=0.
\end{equation}

By the Hodge decomposition (see Theorem 4.18 in \cite{morita2001geometry}) $\hat{\gamma}_t$ can be written as a sum of an exact and harmonic forms: $\hat{\gamma}_t = d \hat{\beta}_t + \hat{h}_t$. Since every harmonic form on a connected, compact, oriented Riemannian manifold is a constant multiple of the Riemannian volume form, $c dV$ (see Corollary 4.14 in \cite{morita2001geometry}), we have
$$0=\frac{d}{dt}1=\frac{d}{dt}\int_\gM \hat{\alpha}_t=\int_\gM \hat{\gamma}_t = \int_\gM d\hat{\beta}_t + \int_\gM \hat{h}_t = \int_\gM \hat{h}_t=c\int_\gM dV,$$
where in the second from the right equality we used Stokes Theorem (see \eg, Theorem 16.11 in  \cite{lee2013smooth}) and the fact that $\gM$ has no boundary. This implies that $c=0$, and
\begin{equation}\label{e:gamma_t}
    \hat{\gamma}_t=d\hat{\beta}_t.
\end{equation}
Using the correspondence between vector fields and $d-1$ forms we let $\beta_t = i_{u_t} dV$, where $u_t\in \mathfrak{X}(\gM)$, and $d\beta_t=d(i_{u_t}dV)=\mathrm{div}(u_t)dV$. 

Lastly, consider $v_t$ defined as follows:
\begin{equation}\label{e:proof_vt}
    v_t = -\frac{u_t}{\alpha_t}.
\end{equation}
With this choice \eqref{e:need_to_show} is satisfied: 
$$d(i_{v_t}\hat{\alpha}_t) + \hat{\gamma}_t = -d(i_{\frac{u_t}{\alpha_t}}(\alpha_t d\vol)) + i_{u_t}d\vol = 0.$$
The theorem is proven.$\qed$

One comment is that for practically finding $v_t$, according to \eqref{e:proof_vt}, we need to get $u_t$, which amounts to solving the Hodge decomposition equation, $\mathrm{div}(u_t)d\vol = \hat{\gamma}_t$, that is equivalent to the following PDE on the manifold $\gM$:
\begin{equation}\label{e:pde_proof}
    \mathrm{div}(u_t) = \frac{d}{dt}\alpha_t.
\end{equation}

\begin{proof}[Proof of Lemma \ref{lem:int_mu}]
The proof uses Stokes theorem: $$\int_\gM \mathrm{div}(u)d\vol = \int_\gM d(i_{u}d\vol) = \int_{\partial\gM} i_{u}d\vol=0,$$
where the last equality is due to the fact that either $\partial \gM=\emptyset$, or, for $x\in\partial \gM$, we have that $u(x)\in T_x \partial\gM$, and therefore $(i_u d\vol)(v_1,\ldots,v_{n-1})=d\vol(u,v_1,\ldots,v_{n-1})=0$, for all $v_1,\ldots,v_{n-1}\in T_x\partial \gM$. This implies $i_u d\vol = 0$. 
\end{proof}

\section{Other proofs}
\begin{proof}[Proof of Theorem \ref{thm:min}]
As we showed in the paper, our loss can be equivalently presented (up to constant factors) as
\begin{equation*}
    l(\theta)=D(\mu, \bar{\mu}_+) + (\lambda-1)\int_\gM\bar{\mu}_-d\vol
\end{equation*}
Where the first term $D(\mu, \bar{\mu}_+)$ is the generalized KL divergence which is non-negative and equals zero iff $\bar{\mu}_+=\mu$ and since $\lambda\geq 1$ the second term is also non-negative and equals zero iff $\mu_-=0$ or $\lambda=1$.\\
First we show that $\bar{\mu}=\mu$ is a minimizer of the loss. Since we assumed $\mu\geq\epsilon$ we have that $\bar{\mu}_+=\max(\mu, \epsilon)=\mu$ and $\bar{\mu}_-=\bar{\mu}_+-\bar{\mu}=0$. So both $D(\mu, \bar{\mu}_+)$ and $\int_\gM\bar{\mu}_-d\vol$ are minimized, which means the entire loss is minimized.\\
Now lets assume $\bar{\mu}$ is a minimizer of the loss. If $\lambda > 1$ $\bar{\mu}$ has to minimize both terms, as we know there exists a minimizer that minimizes both of them. In particular for any $\lambda\geq 1$ we have that $\bar{\mu}$ minimizes $D(\mu, \bar{\mu}_+)$ meaning $\bar{\mu}_+=\mu$. Now we have that $0=1-1=\int_\gM\bar{\mu} d\vol-\int_\gM\mu d\vol=\int_\gM\bar{\mu}_+d\vol + \int_\gM \bar{\mu}_-d\vol-\int_\gM\mu d\vol= \int_\gM \bar{\mu}_-d\vol$. So we get that $\mu_-=0$. Finally $\bar{\mu}=\bar{\mu}_++\bar{\mu}_-=\mu+0=\mu$.
\end{proof}
\begin{proof}[Proof of Lemma \ref{lem:div}.] Proposition 1.2 in \cite{lang2012fundamentals} and Definition 1 in Section 4-4 in \cite{do2016differential} imply that for submanifolds with induced metric the Riemannian covariant derivative at $\vx\in\gM$ satisfies $\nabla_{\ve_i} u = \mP_\vx \frac{\partial \vu}{\partial \ve_i}$, where $\mP_\vx$ is the projection matrix on $T_\vx\gM$ introduced above. Then, denoting $\ve_1,\ldots,\ve_n,\vn_1,\ldots,\vn_k$ an orthonormal basis of $\Real^d$ where the first $n$ vectors span $T_\vx\gM$ and the latter $k$ span $N_\vx\gM$:
\begin{align*}
    \mathrm{div}(\vu) &= \sum_{i=1}^n \ip{\nabla_{\ve_i}\vu,\ve_i}_g = \sum_{i=1}^n \ip{\mP_\vx \frac{\partial\vu}{\partial\ve_i},\ve_i} = \sum_{i=1}^n \ip{ \frac{\partial\vu}{\partial\ve_i},\mP_\vx\ve_i} = \sum_{i=1}^n \ip{ \frac{\partial\vu}{\partial\ve_i},\ve_i}\\ &= \sum_{i=1}^n \ip{ \frac{\partial\vu}{\partial\ve_i},\ve_i} + \sum_{j=1}^k \ip{ \frac{\partial\vu}{\partial\vn_j},\vn_j} = \mathrm{div}_E(\vu),
\end{align*}
\end{proof}

\begin{proof}[Proof of Theorem \ref{thm:universality}.]
From Theorem 6.24 in \cite{lee2013smooth} there exists a neighbourhood $\Omega\subset \Real^d$ of $\gM$ such that the projection $\pi:\Omega\too\gM$ is smooth over $\bar{\Omega}$ (\ie, can be extended to a smooth function over a neighborhood of $\bar{\Omega}$). Since $\gM$ is compact, $\bar{\Omega}$ is also compact. 
According to Theorem \ref{thm:moser} there exists a vector field $\vu^\star \in \mathfrak{X}(\gM)$ so that $\mu = \nu - \mathrm{div}(\vu^\star)$. We extend $\vu^\star$ to $\bar{\Omega}$ by setting  $\vu^\star(\vx) = \vu^\star(\pi(\vx))$, for $\vx\notin \gM$. Note that for $\vx\in\gM$ this definition coincides with the former $\vu^\star$ defined over $\gM$. Similarly to \eqref{e:vu} we have that $\vu^\star(\vx) = \mP_{\pi(\vx)}\vu^\star(\pi(\vx))$.

Corollary 3.4 in \cite{hornik1990universal} shows that given a target smooth function $f:\bar{\Omega}\too\Real$ and $\eps>0$, there exists an MLP with $l$-finite smooth activation that uniformly approximate the first $0\leq m \leq l$ derivatives of $f$ over $\bar{\Omega}$ with error at most $\eps$. An activation $\sigma:\Real\too\Real$ is $l$-finite if it is $l$-times continuously differentiable and satisfies $0<\int_{-\infty}^\infty \abs{\sigma^{(l)}}<\infty$. Note that sigmoid and $\tanh$ are $l$-finite for all $l\geq 1$, and Softplus is $l$-finite for $l\geq 2$. 

Using this approximation result (adapted to vector valued MLP) there exists an MLP $\vv_\theta:\Real^d\too\Real^d$ such that each coordinate of $\vu^\star$ and $\vv_\theta$ are $\eps$ close in value and first partial derivatives over $\bar{\Omega}$. 

Now for arbitrary $\vx\in\gM$ we have
\begin{align*}
    \bar{\mu}(\vx) &= \nu(\vx) -  \mathrm{div}_E(\mP_{\pi(\vx)} \vv_\theta(\pi(\vx)))\\ & = \nu(\vx) -  \mathrm{div}_E\Big (\mP_{\pi(\vx)} \vv_\theta(\pi(\vx))-\mP_{\pi(\vx)} \vu^\star(\pi(\vx))\Big ) - \mathrm{div}(\vu^\star(\vx)) \\
    & = \mu(\vx) - \mathrm{div}_E\Big(\mP_{\pi(\vx)}\brac{ \vv_\theta(\pi(\vx)) - \vu^\star(\pi(\vx)) } \Big) \\
    & = \mu(\vx) - \mathrm{div}_E \Big(\mP_{\pi(\vx)} \ve(\vx) \Big ),
\end{align*}
where we denote $\ve(\vx) = \vv_\theta(\pi(\vx)) - \vu^\star(\pi(\vx))$. We will finish the proof by showing that $$\abs{\mathrm{div}_E \Big(\mP_{\pi(\vx)} \ve(\vx) \Big )} < c\eps$$ for some constant $c>0$ depending only on $\gM$. Note that the l.h.s.~of this equation is a sum of terms of the form $\frac{\partial}{\partial x^i} \parr{(\mP_{\pi(\vx)})_{i,j}\ve(\vx)_j}$, where $(\mP_{\pi(\vx)})_{i,j}$ is the $(i,j)$-th entry of the matrix $\mP_{\pi(\vx)}$ and $\ve(\vx)_j$ is the $j$-th entry of $\ve(\vx)$. Since the value and first partial derivatives of $\pi$ and $\mP$ (as the differential of $\pi$) over $\gM$ can be bounded, depending only on $\gM$, the theorem is proved. 


\end{proof}

\section{Laplacian eigen function calculation}
Given a triangular surface mesh $\gM'$, we wish to calculate the $k$-th eigenfunction of the (discrete) Laplace-Beltrami operator over $\gM'$. We will use the standard (cotangent) discretization of the Laplacian over meshes \citep{botsch2010polygon}. That is, we define $\mL$ to be the cotangent-Laplacian matrix of the graph defined by $\gM'$, and $\mM$ the mass matrix of $\gM'$, \ie, a diagonal matrix where $\mM_{ii}$ is the area of the the Voroni cell of the $i$-th vertex in the mesh. We then calculate the eigenfunctions as the solution to the generalized eigenvalue problem $\mL \vx = \lambda_k \mM \vx$ where $\lambda_k$ is the $k$-th eigenvalue. We sample these $\gM'$ piecewise-linear functions at centroids of faces. 

\section{Linearization of the projection operator $\pi$}
Since we only sample and derivate the projection operator $\pi:\Real^d \too \gM$ over $\gM$, implementing \eqref{e:vu} does not require knowledge of the full projection $\pi$. Rather, it is enough to use its first order expansion over $\gM$. For $\vx_0\in\gM$
$$\pi(\vx)\approx \pi(\vx_0) + \mP_{\vx_0}(\vx-\vx_0) = \vx_0 + \mP_{\vx_0}(\vx-\vx_0)=\hat{\pi}(\vx_0,\vx).$$
Now since $\pi(\cdot)$ and $\hat{\pi}(\vx_0,\cdot)$ have the same value and first partial derivatives at $\vx_0$ we can replace \eqref{e:vu} for each sample point $\vx_0\in\gX\cup \gY$, with
$$\vu(\vx) = \mP_{\hat{\pi}(\vx_0,\vx)}\vv_\theta(\hat{\pi}(\vx_0,\vx)).$$

\section{Unnormalized densities}

As described in section 4, our formulation of the loss is dependent on knowing the volume of the manifold $\gM$. For simple cases like the flat torus or the sphere, we have a closed form formula for this volume. For more general cases, we can show that we don't actually require to know this value, since we can work with unnormalized density functions:
\begin{align*}
    \ell(\theta) &= -\frac{1}{m}\sum_{i=1}^m \log \max\set{\eps,\nu(\vx_i)-\mathrm{div}_E\vu(\vx_i)} \\ & \qquad  + \frac{\vol(\gM)\lambda_-}{l}\sum_{j=1}^l \Big( \eps - \min\set{\eps,\nu(\vy_j)-\mathrm{div}_E\vu(\vy_j)} \Big ), \\
    &= \log \vol(\gM) -\frac{1}{m}\sum_{i=1}^m \log \max\set{\eps',\nu'(\vx_i)-\mathrm{div}_E\vu'(\vx_i)} \\ & \qquad  + \frac{\lambda_-}{l}\sum_{j=1}^l \Big( \eps' - \min\set{\eps',\nu'(\vy_j)-\mathrm{div}_E\vu'(\vy_j)} \Big ), 
\end{align*}
where $\nu'=\vol(\gM)\nu\equiv 1$, $\vu'=\vol(\gM)\vu$, $\eps'=\vol(\gM)\eps'$, and $\log \vol(\gM)$ is a constant. 
Lastly note that the definition of $\vv_t$ is invariant to this scaling and can be computed with the unnormalized quantities.  


\section{Additional Experimental Details}

We used an internal academic cluster with NVIDIA Quadro RTX 6000 GPUs. Every run and seed configuration required 1 GPU.
All other experimental details are mentioned in the main paper.
Our codebase, implemented in PyTorch, is attached in the supplementary materials. We will open-source it post the review process.

\begin{figure}[t]
    \centering
    \begin{tabular}{c@{\hskip1.0pt}c@{\hskip1.5pt}c@{\hskip1.5pt}c@{\hskip1.5pt}c@{\hskip5.0pt}c@{\hskip0.0pt}}
         & density 1k & samples 1k & density 5k & samples 5k & \\
    \rotatebox{90}{\quad Moser Flow} &
    \includegraphics[width=0.18\columnwidth]{images/bird/MF/cropped/generated_density_1k.png}
         &
         \includegraphics[width=0.18\columnwidth]{images/bird/MF/cropped/generated_samples_1k.png}
         & 
         \includegraphics[width=0.18\columnwidth]{images/bird/MF/cropped/generated_density_5k.png}
         &
         \includegraphics[width=0.18\columnwidth]{images/bird/MF/cropped/generated_samples_5k.png}
         &
         \includegraphics[width=0.25\columnwidth]{images/bird/train_nll_vs_total_time.png} 
         \\
          \rotatebox{90}{\qquad  FFJORD}  &
          \includegraphics[width=0.18\columnwidth]{images/bird/FFJORD/cropped/generated_density_1k.png}
         &
         \includegraphics[width=0.18\columnwidth]{images/bird/FFJORD/cropped/generated_samples_1k.png}
         & 
         \includegraphics[width=0.18\columnwidth]{images/bird/FFJORD/cropped/generated_density_5k.png}
         &
         \includegraphics[width=0.18\columnwidth]{images/bird/FFJORD/cropped/generated_samples_5k.png}
         & 
         \includegraphics[width=0.25\columnwidth]{images/bird/iteration_times.png}
         \\
         \rotatebox{90}{\quad Moser Flow} &
        \includegraphics[width=0.18\columnwidth]{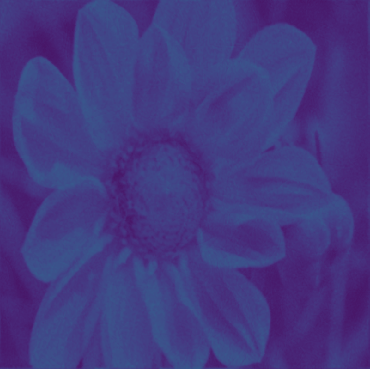}
         &
         \includegraphics[width=0.18\columnwidth]{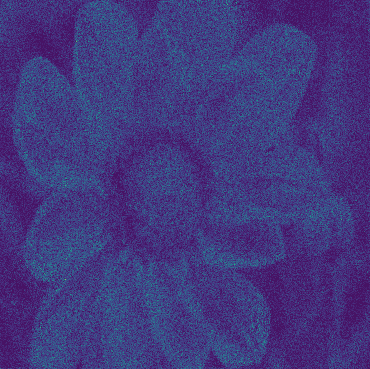}
         & 
         \includegraphics[width=0.18\columnwidth]{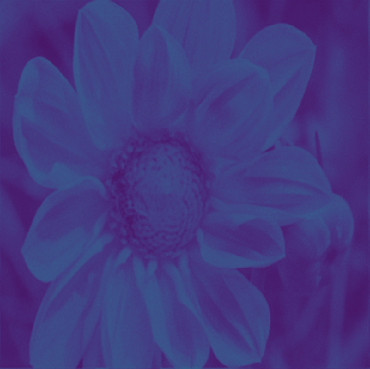}
         &
         \includegraphics[width=0.18\columnwidth]{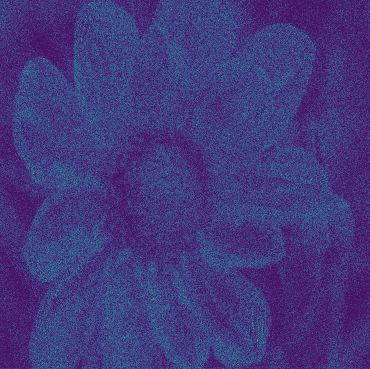}
         &
         \includegraphics[width=0.25\columnwidth]{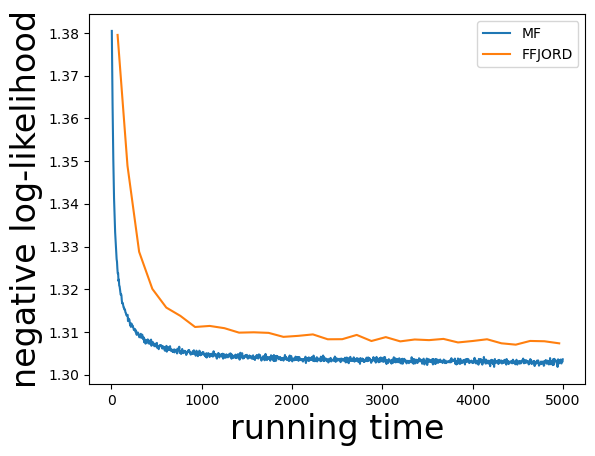} 
         \\
          \rotatebox{90}{\qquad  FFJORD}  &
          \includegraphics[width=0.18\columnwidth]{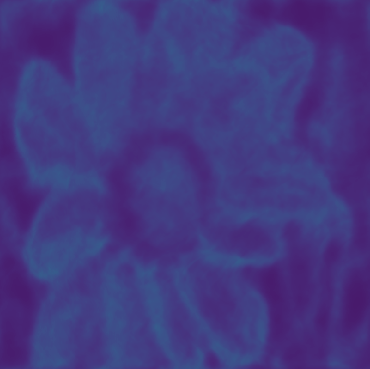}
         &
         \includegraphics[width=0.18\columnwidth]{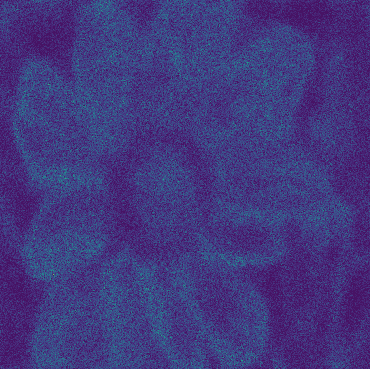}
         & 
         \includegraphics[width=0.18\columnwidth]{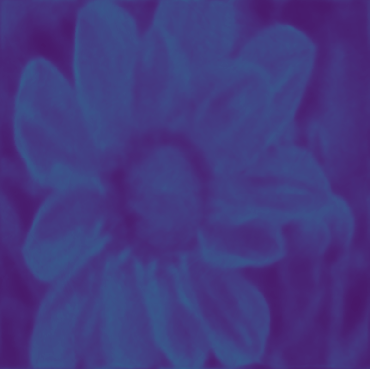}
         &
         \includegraphics[width=0.18\columnwidth]{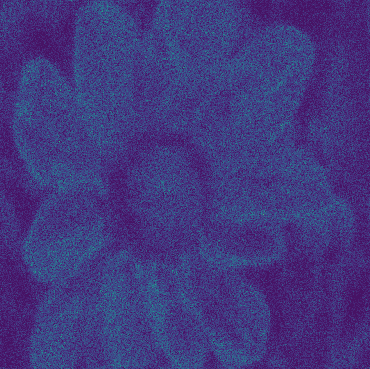}
         & 
         \includegraphics[width=0.25\columnwidth]{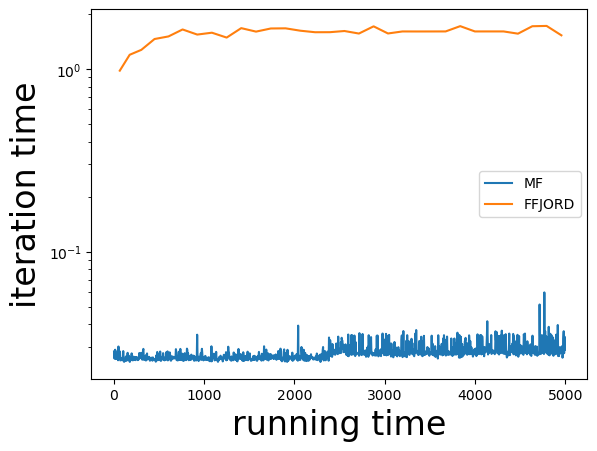}
         \\
    \end{tabular}
    \caption{Comparing learned density and generated samples with MF and FFJORD at different times (in k-sec); top right shows NLL scores for both MF and FFJORD at different times; bottom right shows time per iteration (in $\log$-scale, sec) as a function of total running time (in sec); FFJORD iterations take longer as training progresses. Flickr images (license CC BY 2.0): Bird by Flickr user "lakeworth" \url{https://www.flickr.com/photos/lakeworth/46657879995/}; Flower by Flickr user "daiyaan.db" \url{https://www.flickr.com/photos/daiyaandb/23279986094/}.}
    \label{fig:cameraman}
\end{figure}
\end{document}